\documentclass{article} 
\usepackage{arxiv}
\usepackage{tikz}
\usetikzlibrary{arrows.meta}
\usepackage[english]{babel}


\usepackage{amsmath}
\usepackage{graphicx}
\usepackage{subcaption}
\usepackage{algorithmic}
\usepackage{xspace}

\usepackage{booktabs}

\usepackage{graphicx}
\usepackage[colorlinks=true, allcolors=blue]{hyperref}

\newcommand{\MinLag}{\ensuremath{\mathit{min}}}
\newcommand{\MaxLag}{\ensuremath{\mathit{max}}}
\newcommand{\Makespan}{\ensuremath{\mathit{Makespan}}}

\newcommand{\EndOf}{\ensuremath{\mathrm{end}}}
\newcommand{\StartOf}{\ensuremath{\mathrm{start}}}

\newcommand{\BACCHUS}{\ensuremath{\mathsf{BACCHUS}}}
\newcommand{\soru}{\ensuremath{\mathsf{SORU}}}

\newcommand{\soruh}{\mbox{\ensuremath{\mathsf{SORU}}-\ensuremath{\mathsf{H}}}}
\newcommand{\stnu}{\ensuremath{\mathsf{stnu}}}
\newcommand{\react}{\ensuremath{\mathsf{react}}}
\newcommand{\reactive}{\ensuremath{\mathsf{reactive}}}
\newcommand{\pro}{\ensuremath{\mathsf{pro}}}
\newcommand{\proactive}{\ensuremath{\mathsf{proactive}}}
\newcommand{\saa}{\ensuremath{\mathsf{proactive}_{\mathsf{SAA}}}}
\newcommand{\proheur}{\ensuremath{\mathsf{proactive}_{\mathsf{0.9}}}}
\newcommand{\saashort}{\ensuremath{\mathsf{pro}_{\mathsf{SAA}}}}
\newcommand{\RCPSPmax}{RCPSP/\allowbreak{}max\xspace}
\newcommand{\SRCPSPmax}{SRCPSP/\allowbreak{}max\xspace}

\usepackage{amsmath}

\newcommand{\Start}{\ensuremath{\mathit{start}}}
\newcommand{\End} {\ensuremath{\mathit{end}}}
\newcommand{\LB}{\ensuremath{\mathit{LB}}}
\newcommand{\UB}{\ensuremath{\mathit{UB}}}
\usepackage{amsthm}
\usepackage{multirow}
\newtheorem{prop}{Proposition}
\usepackage{authblk} 

\title{Proactive and Reactive Constraint Programming for Stochastic Project Scheduling with Maximal Time-Lags}

\author[1,2]{Kim van den Houten}
\author[1]{Léon Planken}
\author[3]{Esteban Freydell}
\author[1]{David M.J. Tax}
\author[1]{Mathijs de Weerdt}
\affil[1]{Delft University of Technology, Delft, The Netherlands}
\affil[2]{\texttt{k.c.vandenhouten@tudelft.nl}}
\affil[3]{DSM-Firmenich, Delft, The Netherlands}

\begin{document}

\maketitle

\begin{abstract}
This study investigates scheduling strategies for the stochastic resource-constrained project scheduling problem with maximal time lags (\SRCPSPmax)). Recent advances in Constraint Programming (CP) and Temporal Networks have re-invoked interest in evaluating the advantages and drawbacks of various proactive and reactive scheduling methods. First, we present a new, CP-based fully proactive method. Second, we show how a reactive approach can be constructed using an online rescheduling procedure. A third contribution is based on partial order schedules and uses Simple Temporal Networks with Uncertainty (STNUs). Our analysis shows that the STNU-based algorithm performs best in terms of solution quality, while also showing good relative computation time. 
\end{abstract}

 \begin{itemize}
     \item Code \url{github.com/kimvandenhouten/AAAI25_SRCPSPmax}
    \item Official publication [to be published]: \url{AAAI25_ proceedings}
 \end{itemize}

\section{Introduction}
In real-world scheduling applications, durations of activities are often stochastic, for example, due to the inherent stochastic nature of processes in biomanufacturing. At the same time, hard constraints must be satisfied: e.g. once fermentation starts, a cooling procedure must start at least (minimal time lag) 10 and at most (maximal time lag) 30~minutes later. The combination of maximal time lags and stochastic durations is especially tricky: a delay in duration can cause a violation of a maximal time lag when a resource becomes available later than expected. Such constraints are reflected in the Stochastic Resource-Constrained Project Scheduling Problem with Time Lags (\SRCPSPmax). This problem has been an important focus of research due to its practical relevance and the computational challenge it presents, as finding a feasible solution is NP-hard \cite{bartusch1988scheduling}.

Broadly speaking, there are two main schools of thought regarding solution approaches for stochastic scheduling in the literature: 1) proactive scheduling and 2) reactive scheduling. The main goal of proactive scheduling is to find a robust schedule offline, whereas reactive approaches adapt to uncertainties online.  Proactive and reactive approaches can be considered as opposite ends of a spectrum. Both in practice and literature, it is often observed that methods are hybrid, such as earlier work on the \SRCPSPmax. 

Hybrid approaches appear for example in the form of a \textit{partial order schedule} (POS), which is a temporally flexible schedule in which resource feasibility is guaranteed \cite{policella2004generating}. The state-of-the-art POS approach for \SRCPSPmax is the algorithm $\BACCHUS$ \cite{fu2016robustpos}, although the comparison provided by the authors themselves shows that their earlier proactive method $\soruh$ \cite{fu2016proactive} performs better. Partial order schedules are often complemented with a temporal network \cite{pos2009lombardi}, in which time points (nodes) are modeled together with temporal constraints (edges). Recent advances in temporal networks with uncertainties \cite{hunsberger2024foundations} pave the way for improvements in POS approaches for \SRCPSPmax. 

State-of-the-art methods like $\BACCHUS$ and $\soruh$ use Mixed Integer Programming (MIP), whereas Constraint Programming (CP), especially with interval variables \cite{laborie2015intervals}, has become a powerful alternative for scheduling. This is evidenced by a CP model for resource-constrained project scheduling provided by \cite{laborie2018ibmscheduling}. Additionally, a recent comparison between solvers for CP and MIP demonstrated CP Optimizer's superiority over CPLEX across a broad set of benchmark scheduling problems \cite{mipvscp2023}. These advances, however, have not yet been explored for \SRCPSPmax, despite potential applications. CP can be used for finding robust proactive schedules or within a reactive approach with rescheduling during execution, which has been viewed as too computationally heavy \cite{predictive-reactive2008vonder}. 

A proper benchmarking paper in which a statistical analysis is performed on the results of the different methods for \SRCPSPmax is lacking. Comparing different stochastic scheduling techniques for this problem should be done carefully. Due to the maximal time lags and stochastic durations, methods can fail by violating resource or precedence constraints. Therefore, not only the solution quality and computation time but also the feasibility should be taken into account. \cite{fu2016robustpos} even criticize their own evaluation metric and indicate that future work should provide a more extensive comparison. We argue that statistical tests dealing with infeasibilities correctly are needed to compare different methods properly.

This paper proposes new versions of a proactive approach and a hybrid approach using the latest advances in Constraint Programming \cite{laborie2018ibmscheduling} and Temporal Networks \cite{hunsberger2024foundations}. A new reactive approach with complete rescheduling, which has not been investigated before as far as we know, is included in the comparison too. 
We target the existing research gap due to the lack of a clear and informative comparison among various methods, examining aspects such as infeasibility, solution quality, and computation time (offline and online). In doing so, this work provides a practical guide for researchers and industrial schedulers when selecting from various methods for their specific use cases.

\section{The Scheduling Problem}
The \RCPSPmax problem \cite{Kolisch1996PSPLIBA} is defined as a set of activities~$J$ and a set of resources~$R$, where each activity $j \in J $ has a duration~$d_j$ and requires from each resource $r \in R$ a certain amount indicated by~$r_{r,j}$. A solution to the \RCPSPmax is a start-time assignment~$s_j$ for each activity, such that the constraints are satisfied: (i) at any time the number of resources used cannot exceed the max resource capacity~$c_r$; and (ii) for some pairs~$(i, j)$ of activities, precedence constraints are defined as minimal or maximal time lags between the start time of~$i$ and the start time of~$j$. The goal is to minimize the makespan.

We use the following example from \cite{schutt2013solving} of a simple \RCPSPmax instance.
There are five activities: \( a \), \( b \), \( c \), \( d \), and~\( e \), with start times \( s_a \), \( s_b \), \( s_c \), \( s_d \), and~\( s_e \). The activities have durations of 2, 5, 3, 1, and 2~time units, respectively, and resource requirements of 3, 2, 1, 2, and~2 on a single resource with a capacity of $c_r=4$. The precedence relations are as follows: activity~\( b \) starts at least 2 time units after \( a \) starts, activity \( b \) starts at least 1 time unit before \( c \) starts, activity \( c \) cannot start later than 6 time units after \( a \) starts, and activity \( d \) starts exactly 3 time units before \( e \) starts. These constraints can be visualized in a project graph (Figure~\ref{fig:example_graph_rcpsp_max}). A feasible solution to this problem is $\{s_a=1, s_b=3, s_c=4, s_d=0, s_e=3\}$, for which the Gantt chart is visualized in Figure~\ref{fig:chaining}.

\begin{figure}[htb!]
    \centering
\begin{tikzpicture}
\begin{scope}[every node/.style={circle,thick,draw}]
    \node (a) at (1.5,0) {a};
    \node (b) at (3 ,0) {b};
    \node (c) at (4.5,0) {c};
    \node (d) at (2,1) {d};
    \node (e) at (4,1) {e};
\end{scope}

\begin{scope}[every node/.style={thick,draw}]
    \node (source) at (0,0.5) {};
    \node (sink) at (6,0.5) {};
\end{scope}

\begin{scope}[>={Stealth[black]},
              every node/.style={},
              every edge/.style={draw=black,very thick}]
    \path [->] (source) edge node[midway, above] {$0$} (d);
    \path [->] (source) edge node[midway, below] {$0$} (a);
    \path [->] (d) edge node[midway, below] {$3$} (e);
    \path [->] (e) edge[bend right=30] node[midway, above] {$-3$} (d);
    \path [->] (e) edge node[midway, above] {$2$} (sink);
    \path [->] (a) edge node[midway, above] {$2$} (b);
    \path [->] (b) edge node[midway, above] {$1$} (c); 
    \path [->] (c) edge[bend left=30] node[midway, below] {$-6$} (a); 
 \path [->] (c) edge node[midway, below] {$2$} (sink); 
\end{scope}
\end{tikzpicture}
   \caption{Example project graph \cite{schutt2013solving}.}
    \label{fig:example_graph_rcpsp_max}
\end{figure}

 A special property of \RCPSPmax is the following:
 \begin{prop}Suppose we are given a problem instance~$1$ with durations~$d^1$, resource requirements~$r^1$, and capacity~$c^1$, and let $s^1$ be a feasible schedule for this instance. Suppose now that we transform instance~$1$ into instance~$2$, where all parameters stay equal except that one or more of the activity durations~$d^2$ are shorter than the durations~$d^1$, so $\forall \ j \in J: d^1_j \leq d^2_j$. Then, $s^1$ is also feasible for instance~$2$.  \label{prop:smaller_durations}
 \end{prop}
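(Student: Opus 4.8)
The plan is to verify directly that the two families of constraints defining feasibility for \RCPSPmax---the temporal (min/max time-lag) constraints and the resource-capacity constraints---are both preserved when we keep the start-time assignment $s^1$ fixed and shorten durations. (I read the hypothesis in the direction that makes the claim true, namely that the new durations are no larger than the old, $d^2_j \leq d^1_j$; shortening durations is exactly the case in which no additional delay can be introduced, which is also the intuition flagged in the introduction.)

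First I would dispatch the temporal constraints. Each precedence relation is a minimal or maximal time lag between the \emph{start times} of two activities, i.e.\ a constraint of the form $s_j - s_i \geq \ell_{ij}$ or $s_j - s_i \leq u_{ij}$. Such constraints involve only the start-time variables and are completely independent of the durations. Since instance~2 differs from instance~1 only in the durations and we reuse the \emph{same} assignment $s^1$, every time-lag constraint satisfied by $s^1$ under instance~1 continues to hold verbatim under instance~2.

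The substantive step is the resource argument. Fix a resource $r \in R$ and a time point $t$. Under the assignment $s^1$, activity $j$ occupies $r_{r,j}$ units of $r$ throughout its execution window $[s^1_j,\, s^1_j + d^1_j)$. Because each duration only shrinks, the instance-2 window $[s^1_j,\, s^1_j + d^2_j)$ is contained in the instance-1 window; hence the set of activities active at $t$ in instance~2 is a subset of those active at $t$ in instance~1. Summing $r_{r,j}$ over the (smaller) instance-2 active set and bounding it by the same sum over the instance-1 active set, which is at most $c_r$ by feasibility of $s^1$ for instance~1, shows the capacity is respected at every $t$ and every $r$.

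I anticipate no real obstacle beyond careful bookkeeping. The one point needing attention is the half-open-interval convention for being ``active at time $t$,'' so that the inclusion of active sets is literally correct at window endpoints; modulo this, combining the two preserved constraint families immediately yields feasibility of $s^1$ for instance~2.
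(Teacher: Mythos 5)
Your proposal is correct and follows essentially the same two-part argument as the paper: precedence feasibility is preserved because the start-to-start time-lag constraints do not involve durations, and resource feasibility is preserved because shrinking each execution window can only decrease resource usage at every time point. You also rightly read the hypothesis as $d^2_j \leq d^1_j$ (the displayed inequality in the statement is reversed relative to the prose), which matches the direction the paper's own proof uses; your extra care about the half-open activity windows is a minor sharpening of the same argument rather than a different route.
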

 \begin{proof}
Schedule $s^1$ is still precedence feasible because the start times did not change and the precedence constraints are defined from start to start (they are deterministic). Schedule~$s^1$ is resource feasible for $d^1$. Since $d^2$ is strictly smaller than~$d^1$, the resource usage over time can only be smaller than for instance~$1$, and thus it will not exceed the capacity, and schedule~$s^1$ is also resource feasible for~$d^2$. 
\end{proof}

This property can be helpful when reusing start times for a mutated instance, for example, because of stochastic activity durations. The Stochastic \RCPSPmax (\SRCPSPmax) is an extension in which the durations follow a stochastic distribution \cite{fu2012robustls}. We define the problem as follows: the activity durations are independent random variables indicated by $d_j$ representing the duration of activity j. We assume that each $d_j$ follows a discrete uniform distribution, $d_j \sim$ DiscreteUniform($lb_j, ub_j$) where $lb_j$, $ub_j$ denote the minimum and maximum possible durations for activity $j$. Each duration becomes available when an activity finishes. 

In general, comparing methods for this problem is not straightforward as while comparing method A and B, it could be that only method A obtains a feasible solution. Looking at double hits (instances solved by both methods) is a reasonable choice, but neglects that potentially one of the two methods solved more problem instances than the other. Earlier work on this stochastic variant introduces the $\alpha$-robust makespan \cite{fu2012robustls}, that is the expected makespan value for a scheduling strategy for which the probability that the schedule is feasible is at least $1-\alpha$, as a metric for comparing different methods but also indicated the limitations of this metric.

\section{Background and Related Work}
In this section, we discuss existing approaches for \SRCPSPmax. Furthermore, we introduce all concepts that are needed to understand our scheduling methods that are presented in Section \ref{sec:methods}. We introduce proactive techniques based on Sample Average Approximation in Section \ref{sec:background_proactive}. We explain partial order schedules and temporal networks in Section \ref{sec:background_pos}. Finally, Section \ref{sec:related_work} gives an overview of related work on benchmarking scheduling methods for \SRCPSPmax.

\subsection{Proactive Scheduling}\label{sec:background_proactive}

Proactive scheduling methods aim to find a robust schedule offline by taking information about the uncertainty into account \cite{herroelen2002survey}. In this section, we explain a core technique used in proactive methods: Sample Average Approximation (SAA). Furthermore, we discuss the state-of-the-art proactive methods for \SRCPSPmax. 

\subsubsection{Sample Average Approximation}
A common method for handling discrete optimization under uncertainty is the Sample Average Approximation (SAA) approach \cite{kleywegt2002sample}. Samples are drawn from stochastic distributions and added as scenarios to a stochastic programming formulation. The solver then seeks a solution feasible for all scenarios while optimizing the average objective. However, adding more samples to the SAA increases the number of constraints and variables, significantly raising the solution time.

\subsubsection{SORU and SORU-H}
The most recent proactive method on \SRCPSPmax is proposed by \cite{fu2016proactive} and is recognized as the state of the art. The authors present the algorithm $\soru$, an SAA approach to the scheduling problem that relies on Mixed Integer Programming (MIP) and aims to minimize the $\alpha$-robust makespan. It can be summarized as follows:
(i)~a selection of samples is used to set up the SAA; (ii)~the model seeks a start time vector~$s$ such that the minimal and maximal time lags of precedence constraints are satisfied; (iii)~it allows for $\alpha\%$ of the scenarios to be resource infeasible; (iv)~it minimizes the sample average makespan. 

Since $\soru$ is computationally expensive, the authors propose a heuristic version, dubbed $\soruh$. Instead of a set of samples, one summarizing sample is used, which represents a quantile of the distribution. Note that because of Proposition~\ref{prop:smaller_durations}, this heuristic approximates the $\alpha$-robust makespan. At the same time, it is much cheaper to compute because typically the runtime increases for a larger sample size in SAA, as shown by \cite{fu2016proactive}.

Since nowadays CP is the state of the art for deterministic project scheduling \cite{mipvscp2023}, we re-investigate SAA approaches for \SRCPSPmax with~CP and compare this CP-based proactive approach to reactive approaches.

\subsection{Partial Order Scheduling}\label{sec:background_pos} 
In this section, we discuss partial order scheduling approaches, which can be seen as a reactive-proactive hybrid. Partial order schedules have been used in the majority of the contributions to \SRCPSPmax.

\subsubsection{Constructing Ordering Constraints}
A \textit{partial order schedule} (POS) can be seen as a collection of schedules that ensure resource feasibility, but maintain temporal flexibility. A POS is defined as a graph where nodes represent activities, and edges temporal constraints between them. There are several methods to derive a POS. In the original paper by \cite{policella2004generating}, two approaches are outlined to construct the ordering constraints between activities, either analyzing the resource profile to avoid all possible resource conflicts, based on \textit{Minimal Critical Sets} (MCS) \cite{lgelmund1983algorithmic}, or using a single-point solution together with a chaining procedure to construct \textit{resource chains} (see Figure~\ref{fig:chaining}). They find that using the single-point solution together with chaining seems both simple and most effective. Subsequent work explored alternative MCS-based approaches  \cite{pos2009lombardi,pos2013lombardi} and chaining heuristics \cite{fu2012robustls}. 

\begin{figure}[htb!]
    \centering
    \includegraphics[width=0.4\textwidth]{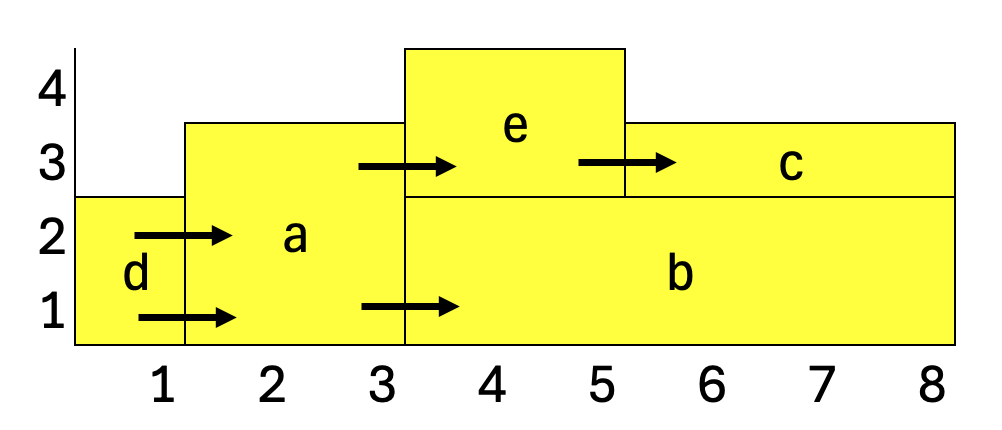}
    \caption{Example Gantt chart (figure adjusted from \cite{fu2012robustls}). The X-axis is time, and Y-axis shows resource demand. 
    \textbf{The arrows} indicate an example of generating a POS with chaining, starting from a fixed solution schedule.}
    \label{fig:chaining}
\end{figure}

\subsubsection{Temporal Networks}
Partial order schedules are often complemented with a temporal model to reason over the temporal constraints. Most POS approaches rely on a Simple Temporal Network (STN), which is a graph consisting of time points (nodes) and temporal difference constraints (edges). The Simple Temporal Network with Uncertainty (STNU) extends the STN by introducing \textit{contingent links}. The duration of these contingent constraints can only be observed, while for regular constraints it can be determined during execution.  The works by \cite{pos2009lombardi} and \cite{pos2013lombardi} on POS are the only ones that we know of using STNUs as their temporal model.
They introduce nodes for the start and the end of each activity with duration constraints between them, connecting respective start nodes with edges for the precedence constraints.

\subsubsection{Execution Strategies}
An STNU is \textit{dynamically controllable} (DC) if there is a strategy to determine execution times for all controllable (non-contingent) time points which ensures that all temporal constraints are met, regardless of the outcomes of the contingent links. Some DC-checking algorithms generate new so-called wait edges to make the network~DC \cite{morris2014dynamic}.

The DC STNU with wait edges is referred to as an Extended STNU (ESTNU) and is input to a Real-Time Execution Algorithm. Such an algorithm is the online component that transforms an STNU into a schedule (i.e. an execution time for each node), given the observations for the contingent nodes. An algorithm specifically tailored to ESTNUs is RTE$^*$ \cite{hunsberger2024foundations,posenato2022cstnu}.

As far as we know, the DC-checking and RTE$^*$ algorithms have not been applied to \SRCPSPmax, despite their efficiency. \cite{pos2009lombardi,pos2013lombardi} used constraint propagation for DC-checking, but do not use RTE$^*$. Other works on POS for \SRCPSPmax that do not use STNUs and DC-checking risk violations of minimal or maximal time-lags during execution \cite{pol2007pos,fu2016robustpos}. Thus, we conclude that there is a research gap in applying the developments in STNU literature to \SRCPSPmax.

\subsection{Benchmarking Approaches}\label{sec:related_work}
Benchmarking procedures for comparing scheduling methods are inconsistent in the literature, with varying problem sets and comparison methods. Some studies used industrial scheduling instances \cite{pos2009lombardi,pos2013lombardi}, while the majority \cite{policella2004generating, fu2012robustls, fu2016proactive, fu2016robustpos} relied on PSPlib \cite{Kolisch1996PSPLIBA} instances, which are deterministic and transformed into stochastic versions with noise. Research typically focused on instances with 10, 20, and 30 activities (j10-j30). Different studies assessed varying metrics, such as schedule flexibility and robustness \cite{policella2004generating,pol2007pos}, solver performance \cite{pos2009lombardi,pos2013lombardi}, or the $\alpha$-robust makespan \cite{fu2012robustls,fu2016proactive,fu2016robustpos}. However, no comprehensive benchmarking paper exists that evaluates both solution quality and computation time while also correctly accounting for infeasibilities. The main challenge is that not all instances can be solved by all methods, making it difficult to directly compare the distributions of solution quality or other metrics. We take inspiration from \cite{icaps}, who compare different planners that can fail, providing a framework for comparing solution quality and speed while also correctly considering these failures. 

\section{Scheduling Methods}\label{sec:methods}
This section outlines the proposed methods. We explain how to use CP for these scheduling problems so far dominated by MIP approaches. Note that in the Technical Appendix \cite{houten2024proactive}, we include a comparison between a CP and a MIP approach for \RCPSPmax  (CPOptimizer and CPLEX; \cite{cplex}) demonstrating that CP outperforms MIP, which is in line with the literature. 

First, we present a deterministic CP model for \RCPSPmax in Section \ref{sec:det_cp}.
The new, stochastic methods for \SRCPSPmax are proposed in Section \ref{sec:new_methods}. We explain the statistical tests for performing pairwise comparisons of these new methods that lead to partial orderings based on solution quality and computation time in Section \ref{sec:stat}. 

\subsection{Constraint Programming for \RCPSPmax}\label{sec:det_cp}
The CP model is: 
\begin{subequations}
\begin{align*}
   \text{Min } \Makespan   \quad  \text{subject to} \span  \\
 \max\{\EndOf(x_j)\} \leq \Makespan; &   \ \forall j \in J \\
\StartOf(x_i) \geq \MinLag_{j,i} + \StartOf(x_j); & \forall j \in J \ \forall i \in S_j  \\
\StartOf(x_i) \leq \MaxLag_{j,i} + \StartOf(x_j); & \ \forall j \in J \ \forall i \in S_j  \\
 \sum_{j \in J} \mathrm{Pulse}(x_j, r_{r,j}) \leq c_r;  & \  \forall r \in  R \\
x_j: \mathrm{IntervalVar}(J, d_j); &  \ \forall j \in J 
\end{align*}
\end{subequations}
For the deterministic \RCPSPmax, we use the modern interval constraints from the IBM CP optimizer \cite{cplex}.
In the equations above,
we use IBM's syntax and modify the RCPSP example from \cite{laborie2018ibmscheduling} to \RCPSPmax. We use the earlier introduced nomenclature together with the minimal time lags $min_{j, i}$ and maximal time lags $max_{j,i}$ that are the temporal differences between start times of activities $j$ and $i$ if $i$ is a successor of $j$. We introduce the decision variable $x_j$ as the interval variable for activity $j\in J$. The $\mathrm{Pulse}$ function generates a cumulative expression over a given interval $x_j$ with a certain value. For a task 
$j$, this value is its resource usage $r_{r,j}$. The aggregated pulse values are constrained so that their sum does not exceed the total available resource capacity $c_r$. 

\subsection{New Methods for \SRCPSPmax}\label{sec:new_methods}
This section introduces three new methods for \SRCPSPmax. The first method is a CP-based version of  a proactive model. Then, we present a novel, fully reactive scheduling approach employing the deterministic model for \RCPSPmax. Finally, we propose an STNU-based approach using CP and POS.
We refer to these approaches as \proactive, \reactive, and \stnu, respectively.
\subsubsection{Proactive Method}\label{sec:}
We outline how to use a scenario-based CP model (instead of MIP) for \SRCPSPmax which we call $\saa$.

For this SAA method, we can reuse the deterministic \RCPSPmax CP model, introduced in Section \ref{sec:det_cp}, but we introduce scenarios. This model is inspired by the MIP version by \cite{fu2016proactive}. A special variant is the SAA with only one sample, for which a $\gamma$-quantile can be used which we call $\proactive_{\gamma}$. If a feasible schedule can be found for the $\gamma$-quantile, this schedule will also be feasible for all duration realizations on the left-hand side of the  $\gamma$-quantile because of Proposition~\ref{prop:smaller_durations}. We provide the SAA model below. We use the same nomenclature as for the deterministic model, but we introduce the notion of scenarios $\omega \in \Omega$, and find a schedule $s_j \ \forall j \in J$ that is feasible for all scenarios it has seen if one exists:
\begin{subequations}
\begin{align*}
   \text{ Min} \frac{1}{|\Omega|} \sum_{\omega \in \Omega} \Makespan({\omega})  \quad \text{subject to} \span \\
 \mathrm{Max_j(\EndOf(x_j^{\omega}))} \leq \Makespan({\omega}); & \  \forall \omega \in \Omega \\
\StartOf(x_i^{\omega}) \geq \MinLag_{j,i} + \StartOf(x_j^{\omega}); &\ \forall j \in J \ \forall i \in S_j \ \forall \omega \in \Omega \\
\StartOf(x_i^{\omega}) \leq \MaxLag_{j,i} + \StartOf(x_j^{\omega}); & \ \forall j \in J \ \forall i \in S_j \ \forall \omega \in \Omega \\
 \sum_{j \in J} \mathrm{Pulse(x_j^{\omega}, r_{r,j}) \leq c_r};  & \ \forall r \in  R \ \forall \omega \in \Omega \\
x_j^{\omega}: \mathrm{IntervalVar(J, y_j^{\omega})}; & \ \forall j \in J \ \forall \omega \in \Omega \\
 s_j = \StartOf(x_j^{\omega}); & \ \forall j \in J \ \forall \omega \in \Omega 
\end{align*}
\end{subequations}

\subsubsection{Reactive Method}
We now present a CP-based fully reactive approach for \SRCPSPmax that we refer to as $\reactive$. 

Fully reactive approaches, involving complete rescheduling by solving a deterministic \RCPSPmax, have been considered impractical due to high computational demands and low schedule stability \cite{predictive-reactive2008vonder}. However, advances in CP for scheduling \cite{laborie2018ibmscheduling, mipvscp2023} mitigate these issues. In the industrial setting where we are applying our work, decision-makers often reschedule their entire future plans when changes occur. Thus, including this reactive approach in our comparison is valuable. To our knowledge, such an approach has not been evaluated before. The outline is:

\begin{itemize}
    \item Start by making an initial schedule with an estimation of the activity durations $\hat{d}$. We can see $\hat{d}$ as a hyperparameter for how conservative the estimation is. For example, using the mean of the distribution could lead to better makespans, but the risk of becoming infeasible for larger duration realizations, while taking the upper bound of the distribution could lead to a very high makespan.  
    \item At every decision moment (when an activity finishes) resolve the deterministic \RCPSPmax while fixing all variables until the current time to reschedule with new information, we again use the estimation $\hat{d}$ for the activities that did not finish yet. Resolving is needed when the finish time of an activity deviates from the estimated finish time. We warm start the solver with the previous solution.
\end{itemize}

\subsubsection{STNU-based Method}

Finally, we present a partial order schedule approach using CP and STNU algorithms \cite{hunsberger2024foundations} which we call $\stnu$. This approach is inspired by many earlier works \cite{pol2007pos,pos2009lombardi,fu2016robustpos}. We use a fixed-solution approach for constructing the ordering constraints. The outline of the approach is:
\begin{enumerate}
    \item We make a fixed-point schedule by solving the deterministic \RCPSPmax with an estimation of the activity durations $\widehat{d}$ and using the chaining procedure, which was explained in Section \ref{sec:background_pos}.
    \item We construct the STNU as follows: 
    \begin{enumerate}
        \item For each activity two nodes are created, representing its start and end.
        \item Contingent links are included between the start of the activity and the end of the activity with $[\LB, \UB]$, where $\LB$ and $\UB$ are the lower and upper bounds of the duration of that activity.
        \item The minimal and maximal time lags are modeled using edges $(\Start_B, -\MinLag, \Start_A)$ and $(\Start_A, \MaxLag, \Start_B)$, where A and B are the preceding and succeeding activity, respectively. These edges correspond to the edges in the precedence graph of the instance, see Figure~\ref{fig:example_graph_rcpsp_max}.
        \item The resource chains that construct the POS are added as additional edges as $(\Start_B, 0, \End_A)$ if activity A precedes activity B. Each arrow in Figure~\ref{fig:chaining} would lead to a resource chain edge.
    \end{enumerate}
\end{enumerate}

The resulting STNU is tested for dynamic controllability (DC), and if the network is DC its extended form (ESTNU) is given to the RTE$^*$ algorithm. Since makespan minimization is of interest, we slightly adjust the algorithm from \cite{hunsberger2024foundations}: instead of selecting an arbitrary executable time point and an arbitrary allowed execution time, we always choose the earliest possible time point at the earliest possible execution time. In the Technical Appendix we include a step-by-step example of the STNU-based method.

\subsection{Statistical Tests for Pairwise Comparison}\label{sec:stat}
A strategy for benchmarking is to provide partial orderings of the scheduling methods for the different metrics of interest (e.g. solution quality, runtime offline, runtime online). A partial ordering can be obtained by executing pairwise comparisons of the methods per problem size and per metric, taking inspiration from \cite{icaps}. 

The Wilcoxon Matched-Pairs Rank-Sum Test (the version by \cite{cureton1967normal}) looks at the ranking of absolute differences and gives insight into which of a pair of methods has consistently better performance than another method.   Infeasible cases can be handled by assigning infinitely bad time and solution quality to these cases, leading to an absolute difference of $\infty$ or $-\infty$ that will be pushed to the highest and lowest rankings.

An alternative test to the Wilcoxon test that is also used by \cite{icaps} is the proportion test (see Test 4 in the book by \cite{100tests}). This test is weaker than the Wilcoxon, but provides at least information about significance in the proportion of wins when the Wilcoxon test shows no significant difference. 
The magnitude test provides more insight into the magnitude differences of the performance metrics. This test is also known as the pairwise t-test on two related samples of scores (see Test 10 in the book by \cite{100tests}). This test can only be performed on double hits because infinitely bad computation time or solution quality will disturb the test.

We provide a detailed explanation of all of the above statistical tests in our Technical Appendix.

 \begin{table*}[htb!]
 \centering
 \scalebox{0.9}{
\fontsize{9}{11}\selectfont
\setlength{\tabcolsep}{1mm} 
\begin{tabular}{llllllll} \toprule
  Test & Legend & $\stnu$-$\react$ & $\stnu$-$\saashort$ & $\stnu$-$\pro_{0.9}$ & $\react$-$\saashort$ & $\react$-$\pro_{0.9}$ & $\saashort$-$\pro_{0.9}$ \\ \midrule
    Wilc. Quality & [n] z (p) & [370] -6.75 (*) & [370] -6.8 (*) & [370] -6.86 (*)  & [370] -2.76 (*) & [370] -8.53 (*) & [370] -7.09 (*) \\  \midrule
   Prop. Quality & [n] prop (p)
    & [277] 0.78 (*) & [276] 0.78 (*) & [278] 0.78(*) & [61] 0.67 (*) & [73] 1.0 (*) & [65] 0.94 (*) \\ \midrule
    Magn. Quality & $[n] \ t \ (p)$ & [330] -11.36 (*)  & [330] -11.39 (*)  & [330] -11.35 (*)  & [370] -2.86 (*)  & [370] -6.73 (*)  & [370] -6.01 (*)  \\ 
    & norm. avg. & $\stnu$: 0.985 & $\stnu$: 0.985 & $\stnu$: 0.984 & $\react$: 1.0 
  & $\react$: 0.999 & $\saashort$: 0.999 \\
     & norm. avg. & $\react$: 1.015 & $\saashort$: 1.015 & $\pro_{0.9}$: 1.016 & $\saashort$: 1.0 & $\pro_{0.9}$: 1.001 & $\pro_{0.9}$: 1.001 \\ \toprule \toprule
      Test &  Legend & $\react$-$\stnu$ & $\react$-$\saashort$ & $\pro_{0.9}$-$\stnu$ & $\pro_{0.9}$-$\saashort$ & $\stnu$-$\saashort$ & \\  \midrule
     Wilc. Offline & [n] z (p) &  [370] -16.67 (*) & [370] -16.67 (*) & [370] -16.67 (*) & [370] -16.67 (*) & [370] -7.26 (*) & \\ \midrule 
     Prop. Offline & [n] prop (p)  & [370] 1.0 (*) & [370] 1.0 (*) & [370] 1.0 (*) & [370] 1.0 (*) & [370] 0.62 (*) \\ \midrule
     Magn. Offline & $[n] \ t \ (p)$ &  [330] -36.15 (*)  & [370] -72.56 (*)  & [330] -36.15 (*)  & [370] -72.56 (*)  & [330] -5.64 (*) &  \\ 
    & norm. avg. &  $\react$: 0.36 & $\react$: 0.24 & $\pro_{0.9}$: 0.36 & $\pro_{0.9}$: 0.24 & $\stnu$: 0.82 & \\
  &  norm. avg.  &  $\stnu$: 1.64 & $\saashort$: 1.76 & $\stnu$: 1.64 & $\saashort$: 1.76 & $\saashort$: 1.18  & \\  \toprule \toprule
     Test &   Legend & $\pro_{0.9}$-$\stnu$ & $\saashort$-$\stnu $& $\pro_{0.9}$-$\react$ & $\saashort$-$\react$ & $\stnu$-$\react$ 
 & \\ \midrule
      Wilc. Online & [n] z (p) &  [370] -16.67 (*) & [370] -16.67 (*) & [370] -16.67 (*) & [370] -16.67 (*) & [370] -9.85 (*) & \\  \midrule
  Prop. Online & [n] prop (p)  & [370] 1.0 (*) & [370] 1.0 (*) & [370] 1.0 (*) & [370] 1.0 (*) & [370] 0.89 (*)  & \\  \midrule
   Magn. Online & $[n] \ t \ (p)$ &  [330] -3.95$ e3$ (*)  & [330] -3.98$ e3$ (*)  & [370] -3.07$e4$ (*)  & [370] -3.14$e4$ (*)  & [330] -142.76 (*) &  \\ 
   & norm. avg. & $\pro_{0.9}$: 0.01 & $\saashort$: 0.01 & $\pro_{0.9}$: 0.0 & $\saashort$: 0.0 & $\stnu$: 0.15 & \\
    & norm. avg. &  $\stnu$: 1.99 & $\stnu$: 1.99 & $\react$: 2.0 & $\react$: 2.0 & $\react$: 1.85 & \\ \bottomrule
\end{tabular}}
\caption{Pairwise test results for ubo50, $\epsilon=1$, covering solution quality and runtime. Metrics include Wilcoxon, proportion, and magnitude tests (Section \ref{sec:stat}). Results: [pairs] z-value (p-value) (* for $p<0.05$), proportion (p-value), and t-stat (p-value) with normalized averages. Additional results for other $\epsilon$, j10–30, ubo50, and ub0100 are in the Technical Appendix. Column headers list the compared methods, excluding zero-difference pairs.}\label{tab:test_results}
\end{table*}

\section{Experimental Evaluation}

The goal of the evaluation is to analyze the relative performance of the different proposed scheduling methods. 
\subsection{Data Generation}
We use the j10, j20, j30, ubo50, and ubo100 sets from the PSPlib \cite{Kolisch1996PSPLIBA}. Instead of using all instances from j10 to j30, we select 50 per set and extend previous work \cite{fu2016proactive,fu2016robustpos} by including 50 instances each from the ubo50 and ubo100 sets. Following prior research, we use deterministic durations to set up stochastic distributions with different noise levels, converting deterministic instances into stochastic ones. We use uniform discrete distributions for durations based on the deterministic processing times $d_j$. Specifically, we define the lower bound as $lb_j = \max(1, \text{round}(d_j - \epsilon \cdot \sqrt{d_j}))$ and the upper bound as $ub_j = \text{round}(d_j + \epsilon \cdot \sqrt{d_j})$, where we vary the noise level $\epsilon=\{1, 2\}$. The source and sink nodes always have a deterministic duration of zero. Each evaluation of a method corresponds to one sample from the distribution, and we sample $10$ times for each instance. We excluded the instance samples for which it is impossible to find a feasible schedule (i.e. the deterministic problem with perfect information is infeasible for the given activity duration sample).

\subsection{Tuning of the Methods}
In this section, we highlight the most important observations and outcomes of our tuning results; for further details, see our Technical Appendix.

All CP models are solved with the IBM CP solver \cite{cplex} with default settings. Most of the deterministic \RCPSPmax the j10-j30 instances can be solved within 60 seconds. For, ubo50 and ubo100 instances, we fixed the time limit to 600 seconds.  We tune the classical SAA with multiple scenarios $\saa$ and a heuristic approach $\proactive_{\gamma}$. We used $\gamma=0.9$ for $\proactive_{\gamma}$. For $\saa$, we sampled quantiles at $\gamma \in [0.25, 0.5, 0.75, 0.9]$ and set a time limit of 1800 seconds. The algorithm $\reactive$ uses offline the $\proactive_{\gamma}$ algorithm, for which we fixed $\gamma=0.9$. We investigated the effect of the time limit for rescheduling and set this hyperparameter to 2 seconds. The algorithm $\stnu$ consists of an offline procedure, which comprises building up the network and checking dynamic controllability; if it's not DC, the method fails. We observed that that the choice of the $\gamma$-quantile for the fixed-point schedule significantly affects the ratio of DC networks. We select $\gamma=1$ for our final method $\stnu$, because this setting results in much more DC networks than lower quantiles.

\subsection{Results}\label{sec:results}

We include an analysis of the feasibility ratios of the different methods. A selection of the results of the statistical tests are shown in Table \ref{tab:test_results} (the remaining results are included in the Technical Appendix). We present summarized partial orderings on 1) solution quality, 2) time offline, 3) time online.

\subsubsection{Feasibility Ratio}
We first analyze the obtained feasibility ratios in Table \ref{tab:feasible_combined}. For $\epsilon=1$, the feasibility ratios of $\saa$, $\proactive_{0.9}$, and $\reactive$ are similar for instance sets j10-ubo50, with $\reactive$ slightly lower for ubo100. The $\stnu$ method has the lowest feasibility rate, but the difference narrows as problem size increases. For $\epsilon=2$, the $\stnu$ achieves the highest feasibility ratios due to better handling of larger variances in durations.
\begin{table}[htb!]
    \fontsize{9}{11}\selectfont
    \setlength{\tabcolsep}{1mm}
    \centering
    \begin{tabular}{lrrrr|rrrr}  
        \toprule
        \multirow{2}{*}{Set} & \multicolumn{4}{c|}{$\epsilon = 1$} & \multicolumn{4}{c}{$\epsilon = 2$} \\
        \cmidrule(lr){2-5} \cmidrule(lr){6-9}
                             & $\stnu$ & $\saashort$ & $\pro_{0.9}$ & $\react$ & $\stnu$ & $\saashort$ & $\pro_{0.9}$ & $\react$ \\  
        \midrule
        j10                  &  0.65 &  0.85 &  0.85 &  0.85 &  0.63 &  0.63 &  0.64 &  0.63 \\
        j20                  &  0.65 &  0.76 &  0.76 &  0.76 &  0.63 &  0.54 &  0.53 &  0.53 \\
        j30                  &  0.78 &  0.89 &  0.89 &  0.89 &  0.63 &  0.51 &  0.48 &  0.49 \\
        ubo50                &  0.77 &  0.86 &  0.86 &  0.86 &  0.67 &  0.41 &  0.42 &  0.41 \\
        ubo100               &  0.84 &  0.91 &  0.91 &  0.88 &  0.79 &  0.35 &  0.36 &  0.33 \\
        \bottomrule
    \end{tabular}
      \caption{Feasibility Ratios for \(\epsilon = 1\) and \(\epsilon = 2\).} \label{tab:feasible_combined}
\end{table}

\subsubsection{Results Statistical Tests}

Table \ref{tab:test_results} shows a subset of the pairwise test results for the metrics solution quality, offline time, and online time. In our Appendix, we provide all test results per instance set / noise level $\epsilon$ setting. The outcomes of the test results can be used to make partial orderings of the methods, distinguishing between a strong partial ordering (Wilcoxon test) and a weak ordering (proportion test). Besides that, the results of the magnitude test give insight in the magnitude differences of each performance metric based on the double hits. For example, in Table \ref{tab:test_results}, the normalized average makespan is 0.958 for \stnu \ and 1.015 for \reactive, with a significant advantage for \stnu \ on double hits.

\subsubsection{Partial Ordering Visualization}
In Figures~\ref{fig:part_ord_quality_summary}--\ref{fig:part_ord_online_summary}, an arrow $A \rightarrow B$ indicates that in the majority of the settings either $A$ is consistently better than $B$ (Wilcoxon) and/or $A$ is better than $B$ a significant number of times (proportion). Due to space constraints and for clarity, we have chosen to display only the most common pattern per metric rather than all partial orderings per instance set and noise level. Consequently, the distinction between strong and weak partial orderings is omitted in these figures. We refer to the Technical Appendix for the partial orderings per setting including the distinction between a strong and weak partial ordering.

\subsection{Analysis}
 Figure~\ref{fig:part_ord_quality_summary} shows the visualization of the partial ordering for \textbf{solution quality} (makespan). The results are consistent for the different instance sets and noise levels. The $\stnu$
shows to be the outperforming method based on solution
quality. The $\reactive$ approach outperforms the
$\proactive$ methods. Furthermore, $\saa$ outperforms in many cases $\proactive_{0.9}$, although for larger
instances and a higher noise level a significant difference is not present. In earlier work, $\proactive$ approaches were considered state-of-the-art, but in our analysis, we found better makespan results for the STNU-based approach. We found that for each pair for which an arrow is visualized in Figure~\ref{fig:part_ord_quality_summary} also a significant magnitude difference was found on the double hits.

\begin{figure}[htb!]
    \centering
\scalebox{0.87}{
\begin{tikzpicture}
\begin{scope}[every node/.style={}]
    \node (A) at (0.5,0) {\stnu};
    \node (B) at (2.5,0) {\reactive};
    \node (C) at (5,0) {\saa};
    \node (D) at (8,0) {\proheur};
\end{scope}

\begin{scope}[>={Stealth[black]},
              every node/.style={},
              every edge/.style={draw=black}]
    \path [->] (A) edge node {} (B);
    \path [->] (A) edge[bend right=10] node {} (C);
    \path [->] (B) edge node {} (C);
    \path [->] (B) edge[bend left=10] node {} (D);
    \path [->] (C) edge node {} (D);
        \path [->] (A) edge[bend right=10] node {} (D);
\end{scope}
\end{tikzpicture}
}
    \caption{Summarizing illustration of the partial ordering of the different methods for solution quality.}
    \label{fig:part_ord_quality_summary}
\end{figure}

Figure~\ref{fig:part_ord_offline_summary} shows that the $\proactive_{0.9}$ and $\reactive$ have the lowest relative \textbf{offline runtime} and we found no significant difference between the two. The ordering of $\stnu$ and $\saa$ depends on the problem size (the ordering flips for larger instances). These relative orderings are confirmed with the magnitude test on double hits. However, we assign infinitely bad offline computation time to infeasible solutions. When executing the Wilcoxon test we include all instances for which at least one of the two methods generated a feasible solution. For that reason, a flip in the partial ordering occurs for $\epsilon=2$, ubo50 and ubo100: $\stnu$ shows the best performance, and $\proactive_{0.9}$ outperforms $\reactive$ according to the Wilcoxon tests. This was mainly due to the higher feasibility ratio for the better methods (see Table \ref{tab:feasible_combined}) as the results from the magnitude test on double hits contradicted Wilcoxon in these cases (we did not visualize this pattern in the main paper, but we included it in the Appendix). 

\begin{figure}[htb!]
    \centering
\scalebox{0.87}{
\begin{tikzpicture}
\begin{scope}[every node/.style={}]
    \node (stnu) at (8,0.5) {\stnu};
    \node (rea) at (0,0) {\reactive};
    \node (saa) at (3,0.5) {\saa};
    \node (pro) at (0,1) {\proheur};
\end{scope}

\begin{scope}[>={Stealth[black]},
              every node/.style={},
              every edge/.style={draw=black}]
    \path [->] (rea) edge node {} (saa);
    \path [->] (pro) edge node {} (saa);
    \path [->] (saa) edge[bend right=15] node[midway, above] {smaller instances} (stnu);
    \path [->] (stnu) edge[bend right=15] node[midway, below] {larger instances} (saa);
    \path [->] (rea) edge[bend right=12] node {} (stnu);
    \path [->] (pro) edge[bend left=12] node {} (stnu); 
\end{scope}
\end{tikzpicture}
}
    \caption{Summarizing illustration of the partial ordering of the different methods for time offline.}
    \label{fig:part_ord_offline_summary}
\end{figure}

We observe the general partial ordering for \textbf{online runtime} in Figure~\ref{fig:part_ord_online_summary}. The superiority of $\proactive_{0.9}$ and $\saa$ are expected, as these methods only require a feasibility check online. The faster online time of the $\stnu$ compared to the $\reactive$ can be explained by the fact that the $\stnu$ employs a polynomial real-time execution algorithm, while $\reactive$ calls a deterministic CP solver multiple times. These results are confirmed with a magnitude test on double hits. Again, there are a few settings ($\epsilon=2$, ubo50 and ubo100) in which the magnitude and the Wilcoxon test contradict each other: $\stnu$ outperforms the $\proactive$ methods based on the Wilcoxon test due to higher feasibility ratios, while the magnitude test on double hits shows better online runtime for $\proactive_{0.9}$ and $\saa$.

\begin{figure}[htb!]
    \centering
\scalebox{0.9}{
\begin{tikzpicture}
\begin{scope}[every node/.style={}]
    \node (rea) at (6,0.5) {\reactive};
    \node (saa) at (0,0) {\saa};
    \node (stnu) at (3,0.5) {\stnu};
    \node (pro) at (0,1) {\proheur};
\end{scope}

\begin{scope}[>={Stealth[black]},
              every node/.style={},
              every edge/.style={draw=black}]
    \path [->] (saa) edge node {} (stnu);
    \path [->] (pro) edge node {} (stnu);
   \path [->] (stnu) edge node {} (rea);
   \path [->] (pro) edge node {} (rea);
   \path [->] (saa) edge node {} (rea);
\end{scope}
\end{tikzpicture}
}
       \caption{Summarizing illustration of the partial ordering of the different methods for time online.}
    \label{fig:part_ord_online_summary}
\end{figure}

\section{Conclusion and Future Work}
This study introduces new scheduling methods for \SRCPSPmax and statistically benchmarks them, addressing the existing research gap of a lacking benchmarking paper.

Until now, proactive (\soruh) methods were considered the best method for \SRCPSPmax, although partial order schedules have shown potential in earlier research. We found that proactive methods can be improved with online rescheduling, resulting in better solution quality for the method $\reactive$ compared to proactive approaches $\saa$ and $\proactive_{\gamma}$. We find that the algorithm $\stnu$ that uses partial order schedules outperforms the other methods on solution quality in our evaluation. Although in general, $\proactive_{\gamma}$ and $\reactive$ have better offline computation time than $\stnu$, and $\saa$ and $\proactive_{\gamma}$ have better online computation time than $\stnu$, the $\stnu$ also showed good relative runtime results due to the polynomial time STNU-related algorithms. 

In future work, the same approach could be used to evaluate other scheduling problems, and we can gain more insight into how these methods perform on both well-known problems from the literature and in practical situations. Temporal constraints that are defined from end-to-start can be an interesting problem domain for future work. The $\stnu$ method can extended in multiple directions, e.g., rearranging the order of the jobs online or using probabilistic STNs are both interesting for future work.

Furthermore, the set of methods could even be broadened by including sequential approaches \cite{halperin2022reinforcement} or machine learning-based methods like the graph neural network in \cite{teichteil2023fast} for SRCPSP.

\subsubsection*{Reproducibility Statement}
 All our results can be reproduced following the README from our repository.

 \section*{Acknowledgements}
 This work is supported by the AI4b.io program, a collaboration between TU Delft and dsm-firmenich, and is fully funded by dsm-firmenich and the RVO (Rijksdienst voor Ondernemend Nederland).

\bibliographystyle{alpha}
\bibliography{sample}
\appendix
\newpage
\section*{Technical Appendix}

\section{Reproducibility}
We uploaded our code to make it possible to reproduce the results of our research and to facilitate others using a similar benchmarking method. Please refer to the \texttt{READ.ME} of our repository (\url{github.com/kimvandenhouten/AAAI25_SRCPSPmax}) to find instructions for requirements and directions for our experiment scripts and the scripts to run the statistical tests.

\section{Additional Explanation Stochastic Methods}

\subsection{SRCPSP/max Example}
 Let's reuse the example instance from the main paper (Figure~\ref{fig:example_graph_rcpsp_max}), but now we introduce uncertainty in the durations. We now suppose that activity $d$ follows a uniform discrete distribution with $P(d=1)=0.5$ and $P(d=2)=0.5$.

\subsection{STNU Method}

This section gives a more comprehensive explanation of the STNU method. We showcase what the temporal network for the given problem instance looks like and how we can see whether the network is DC.

\subsubsection{Step 1: Get fixed-point schedule by solving the deterministic RCPSP/max}

Suppose we use $\hat{d}=\{ 
 \hat{d}_a=2, \hat{d}_b=5, \hat{d}_c=3, \hat{d}_d=2, \hat{d}_e=2\}$, and obtain the following fixed-point solution:
\begin{figure}[htb]
    \centering
    \begin{subfigure}[t]{0.49\linewidth}
        \centering
        \includegraphics[width=\linewidth]{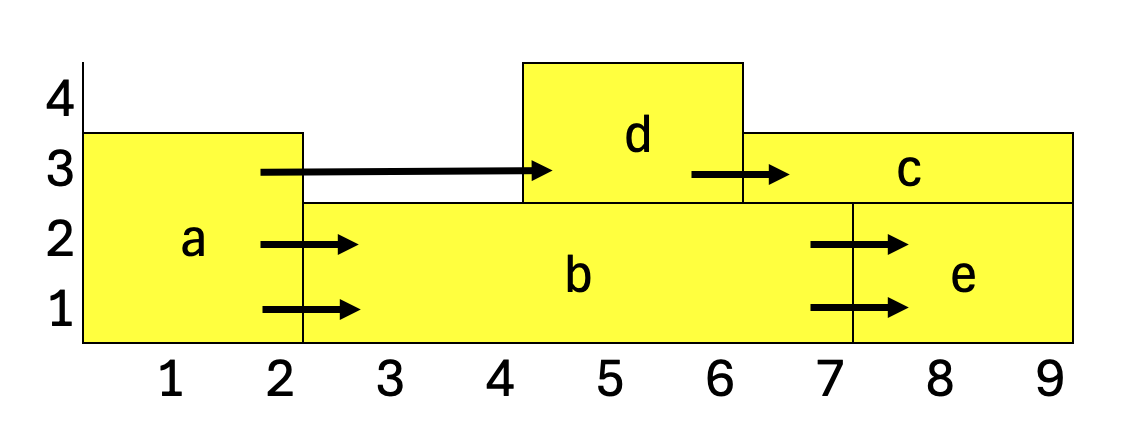}
        \caption{Fixed-point solution with resource chains.}
        \label{fig:resource_chains}
    \end{subfigure}
    \hfill
    \begin{subfigure}[t]{0.48\linewidth}
        \centering
        \includegraphics[width=\linewidth]{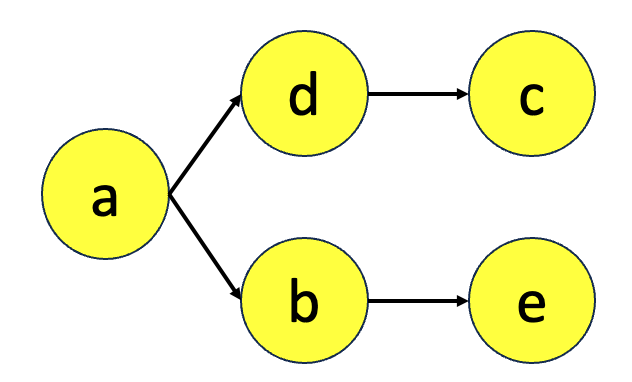}
        \caption{Partial order schedule based on resource chains.}
        \label{fig:pos_from_chains}
    \end{subfigure}
    \caption{Partial order schedules.}
    \label{fig:combined_figure}
\end{figure}

\subsubsection{Step 2: Construct STNU}
See in Figures \ref{fig:stnu_example_1} - \ref{fig:stnu_example_3} how an STNU is constructed step by step.
 \begin{figure}[h!]
    \centering
\begin{tikzpicture}
\begin{scope}[every node/.style={draw,circle}]
    \node (sa) at (-3,0) {start a};
    \node (fa) at (0,0) {finish a};
       \node (sb) at (3,-1.5) {start b};
    \node (fb) at (6,-1.5) {finish b};
      \node (sc) at (9,1.5) {start c};
    \node (fc) at (12,1.5) {finish c};
      \node (sd) at (3,1.5) {start d};
    \node (fd) at (6,1.5) {finish d};
         \node (se) at (9,-1.5) {start e};
    \node (fe) at (12,-1.5) {finish e};
\end{scope}

\begin{scope}[>={Stealth[black]},
              every node/.style={},
              every edge/.style={draw=black}]
    \path [->] (sa) edge[bend right=10] node[midway, below] {2} (fa);
    \path [->] (fa) edge[bend right=10] node[midway, above] {-2} (sa);
    
    \path [->] (sb) edge[bend right=10] node[midway, below] {5} (fb);
    \path [->] (fb) edge[bend right=10] node[midway, above] {-5} (sb);
    
        \path [->] (sc) edge[bend right=10] node[midway, below] {3} (fc);
    \path [->] (fc) edge[bend right=10] node[midway, above] {-3} (sc);
    
        \path [->] (sd) edge[bend right=10] node[midway, below] {d:1} (fd);
    \path [->] (fd) edge[bend right=10] node[midway, above] {D:-2} (sd);

          \path [->] (se) edge[bend right=10] node[midway, below] {2} (fe);
    \path [->] (fe) edge[bend right=10] node[midway, above] {-2} (se);

\end{scope}
\end{tikzpicture}
    \caption{Step 2a: For each activity, two nodes are created, representing its start and end, and Step 2b: We connect the start and end of each activity with a tight link (activities $a$, $b$, $c$, $e$) or a contingent link (activity $d$).}
    \label{fig:stnu_example_1}
\end{figure}

 \begin{figure}[h!]
    \centering
\begin{tikzpicture}
\begin{scope}[every node/.style={draw=gray,circle,text=gray}]
     \node (sa) at (-3,0) {start a};
    \node (fa) at (0,0) {finish a};
       \node (sb) at (3,-1.5) {start b};
    \node (fb) at (6,-1.5) {finish b};
      \node (sc) at (9,1.5) {start c};
    \node (fc) at (12,1.5) {finish c};
      \node (sd) at (3,1.5) {start d};
    \node (fd) at (6,1.5) {finish d};
         \node (se) at (9,-1.5) {start e};
    \node (fe) at (12,-1.5) {finish e};
\end{scope}

\begin{scope}[>={Stealth[gray]},
              every node/.style={text=gray},
              every edge/.style={draw=gray}]
    \path [->] (sa) edge[bend right=10] node[midway, below] {2} (fa);
    \path [->] (fa) edge[bend right=10] node[midway, above] {-2} (sa);
    
    \path [->] (sb) edge[bend right=10] node[midway, below] {5} (fb);
    \path [->] (fb) edge[bend right=10] node[midway, above] {-5} (sb);

          \path [->] (sc) edge[bend right=10] node[midway, below] {3} (fc);
    \path [->] (fc) edge[bend right=10] node[midway, above] {-3} (sc);
    
        \path [->] (sd) edge[bend right=10] node[midway, below] {d:1} (fd);
    \path [->] (fd) edge[bend right=10] node[midway, above] {D:-2} (sd);

          \path [->] (se) edge[bend right=10] node[midway, below] {2} (fe);
    \path [->] (fe) edge[bend right=10] node[midway, above] {-2} (se);
\end{scope}

    \begin{scope}[>={Stealth[black]},
              every node/.style={},
              every edge/.style={draw=black}]
    \path [->] (sd) edge[bend right=10] node[midway, left] {3} (se);
     \path [->] (se) edge[bend right=1] node[midway, left] {-3} (sd);
\path [->] (sb) edge[bend left=30] node[midway, above] {-2} (sa);
   \path [->] (sc) edge[bend right=18] node[midway, left] {-1} (sb);
      \path [->] (sa) edge[bend left=30] node[midway, below] {6} (sc);
\end{scope}
\end{tikzpicture}
    \caption{Step 2c: The minimal and maximal time lags are modeled. These edges correspond to the precedence graph of Figure~\ref{fig:example_graph_rcpsp_max}.}
    \label{fig:stnu_example_2}
\end{figure}

 \begin{figure}[h!]
    \centering
\begin{tikzpicture}
\begin{scope}[every node/.style={draw=gray,circle,text=gray}]
    \node (sa) at (-3,0) {start a};
    \node (fa) at (0,0) {finish a};
       \node (sb) at (3,-1.5) {start b};
    \node (fb) at (6,-1.5) {finish b};
      \node (sc) at (9,1.5) {start c};
    \node (fc) at (12,1.5) {finish c};
      \node (sd) at (3,1.5) {start d};
    \node (fd) at (6,1.5) {finish d};
         \node (se) at (9,-1.5) {start e};
    \node (fe) at (12,-1.5) {finish e};
\end{scope}

\begin{scope}[>={Stealth[gray]},
              every node/.style={text=gray},
              every edge/.style={draw=gray}]
    \path [->] (sa) edge[bend right=10] node[midway, below] {2} (fa);
    \path [->] (fa) edge[bend right=10] node[midway, above] {-2} (sa);
    
    \path [->] (sb) edge[bend right=10] node[midway, below] {5} (fb);
    \path [->] (fb) edge[bend right=10] node[midway, above] {-5} (sb);
    
        \path [->] (sc) edge[bend right=10] node[midway, below] {3} (fc);
    \path [->] (fc) edge[bend right=10] node[midway, above] {-3} (sc);
    
        \path [->] (sd) edge[bend right=10] node[midway, below] {d:1} (fd);
    \path [->] (fd) edge[bend right=10] node[midway, above] {D:-2} (sd);

          \path [->] (se) edge[bend right=10] node[midway, below] {2} (fe);
    \path [->] (fe) edge[bend right=10] node[midway, above] {-2} (se);

   \path [->] (sd) edge[bend right=10] node[midway, left] {3} (se);
     \path [->] (se) edge[bend right=1] node[midway, left] {-3} (sd);
\path [->] (sb) edge[bend left=30] node[midway, above] {-2} (sa);
   \path [->] (sc) edge[bend right=18] node[midway, left] {-1} (sb);
      \path [->] (sa) edge[bend left=30] node[midway, below] {6} (sc);
\end{scope}

    \begin{scope}[>={Stealth[black]},
              every node/.style={},
              every edge/.style={draw=black}]
   \path [->] (sd) edge[bend right=10] node[midway, above] {0} (fa);
  \path [->] (sc) edge[bend right=10] node[midway, above] {0} (fd);
    \path [->] (se) edge[bend right=10] node[midway, below] {0} (fb);
    \path [->] (sb) edge[bend right=10] node[midway, above] {0} (fa);
\end{scope}
\end{tikzpicture}
    \caption{Step 2d: The resource chains that construct the POS are added as additional edges as $(\Start_B, 0, \End_A)$ if activity A precedes activity B. Each arrow in Figure~\ref{fig:combined_figure} would lead to a resource chain edge.}
    \label{fig:stnu_example_3}
\end{figure}
\subsubsection{Step 3: Check DC}
The resulting network of Figure~\ref{fig:stnu_example_3} does not contain any negative cycles, and therefore the network is dynamically controllable \cite{morris2014dynamic}. 
Note that for this specific example, if we had used the fixed-point solution and resource chains from Figure~\ref{fig:chaining}, we would have found a negative cycle in the resulting STNU and, therefore, no dynamic controllability; see Figure~\ref{fig:non_dc}.

 \begin{figure}[h!]
    \centering
\begin{tikzpicture}
\begin{scope}[every node/.style={draw=black,circle,text=black}]
    \node (sd) at (-2,2) {start d};
    \node (fd) at (1,2) {finish d};
    
          \node (sa) at (1,0) {start a};
    \node (fa) at (4,0) {finish a};

    \node (sb) at (6,-2) {start b};
    \node (fb) at (9,-2) {finish b};
    
      \node (sc) at (9,2) {start c};
    \node (fc) at (12,2) {finish c};
    
 \node (se) at (4,2) {start e};
    \node (fe) at (7,2) {finish e};
\end{scope}

\begin{scope}[>={Stealth[black]},
              every node/.style={text=black},
              every edge/.style={draw=black}]
    \path [->] (sa) edge[bend right=10] node[midway, below] {2} (fa);
    \path [->] (fa) edge[bend right=10] node[midway, above] {-2} (sa);
    
    \path [->] (sb) edge[bend right=10] node[midway, below] {5} (fb);
    \path [->] (fb) edge[bend right=10] node[midway, above] {-5} (sb);
    
        \path [->] (sc) edge[bend right=10] node[midway, below] {3} (fc);
    \path [->] (fc) edge[bend right=10] node[midway, above] {-3} (sc);
    
        \path [->] (sd) edge[bend right=10] node[midway, below] {d:1} (fd);
    \path [->] (fd) edge[bend right=10] node[midway, above] {D:-2} (sd);

          \path [->] (se) edge[bend right=10] node[midway, below] {2} (fe);
    \path [->] (fe) edge[bend right=10] node[midway, above] {-2} (se);

    \path [->] (sd) edge[bend left=60] node[midway, above] {3} (se);
     \path [->] (se) edge[bend right=40] node[midway, above] {-3} (sd);
\path [->] (sb) edge[bend left=20] node[midway, below] {-2} (sa);
   \path [->] (sc) edge[bend right=10] node[midway, left] {-1} (sb);
         \path [->] (sa) edge[bend right=50] node[midway, above] {6} (sc);
\end{scope}

    \begin{scope}[>={Stealth[black]},
              every node/.style={},
              every edge/.style={draw=black}]
              
   \path [->] (sa) edge node[midway, left] {0} (fd);
  \path [->] (se) edge node[midway, right] {0} (fa);
    \path [->] (sc) edge[bend right=10] node[midway, below] {0} (fe);
    \path [->] (sb) edge[bend right=10] node[midway, above] {0} (fa);
\end{scope}

   \begin{scope}[>={Stealth[red]},
              every node/.style={},
              every edge/.style={draw=red}]
              
  \path [->] (sd) edge[loop, out=135, in=200, looseness=5] node[midway, left] {-1} (sd);
\end{scope}
\end{tikzpicture}
    \caption{This figure shows an example of a non-DC network (the resource chains from Figure~\ref{fig:chaining}), which is due to a negative cycle. The negative cycle can be detected by backward propagation (using the algorithm from \cite{morris2014dynamic}) and is the path: $\Start_D$ $\leftarrow$ $\End_D$ $\leftarrow$ $\Start_A$ $\leftarrow$ $\Start_E$ $\leftarrow$ $\End_A$ $\leftarrow$ $\Start_D$ with length $-1$.}
    \label{fig:non_dc}
\end{figure}

\subsubsection{Step 4: Run RTE*}
The RTE* \cite{posenato2022cstnu,hunsberger2024foundations} algorithm provides the online scheduling strategy that transforms the network into a schedule (i.e. it assigns execution times to each node). The RTE* adapts the execution times to the outcome of the contingent time points, resulting in schedule $\{s_a=0, s_b=2, s_c=5, s_d=4, s_e=7\}$ if $d_d=1$ and $\{s_a=0, s_b=2, s_c=6, s_d=4, s_e=7\}$ if $d_d=2$.
\newpage

\vspace{20pt}
\section{Tuning}\label{app:tuning}
This section describes the tuning process for the different scheduling methods. First, we investigate the effect of the time limit on solving the deterministic CP. Then, for the proactive approach, we tuned the time limit and the sample selection for the Sample Average Approximation (SAA). For $\reactive$, we tuned the quantile approximation and the time limit for rescheduling. 
\subsection{Comparing CP and MIP for RCPSP/max}

This section shows a comparison of a CP-based formulation and MILP-based formulation of the RCPSP/max problem. The MILP model that we use in the comparison is the following: 

\subsubsection*{MILP Formulation for RCPSP/max}

\textbf{Sets:}
\begin{align*}
    J & : \text{ set of tasks, indexed by } j, \\
    R & : \text{ set of resources, indexed by } r, \\
    T & : \text{ set of discrete time points, indexed by } t.
\end{align*}

\textbf{Parameters:}
\begin{align*}
    p_j & : \text{duration of task } j, \quad \forall j \in J, \\
    b_r & : \text{capacity of resource } r, \quad \forall r \in R, \\
    l_{r,j} & : \text{demand of task } j \text{ on resource } r, \quad \forall r \in R, j \in J, \\
    \text{TemporalConstraints} & : \text{ set of precedence relationships } (\text{pred}, \text{lag}, \text{suc}).
\end{align*}

\textbf{Decision Variables:}
\begin{align*}
    x_{j,t} & : \begin{cases} 
    1 & \text{if task } j \text{ starts at time } t, \\
    0 & \text{otherwise}, 
    \end{cases} \quad \forall j \in J, t \in T, \\
    Makespan &: \text{makespan (objective to minimize)}.
\end{align*}

The MIP model is given by:
\begin{subequations}
\begin{align*}
 \text{Min } \Makespan   \quad  \text{s.t} \\
    \sum_{t \in T} t \cdot x_{j,t} + p_j,  \leq Makespan;  & \  \forall j \in J
 \\
    \sum_{t \in T} t \cdot x_{i,t} + \text{lag} \leq \sum_{t \in T} t \cdot x_{j,t}; & \ \forall (i, \text{lag}, j) \in \text{TemporalConstraints}
    \\
    \sum_{j \in J} \sum_{\tau = t - p_j + 1}^{t} l_{r,j} \cdot x_{j,\tau} \leq b_r; & \ \forall r \in R, \, \forall t \in T
 \\
    \sum_{t \in T} t \cdot x_{i,t} \geq 0; & \ \forall j \in J
\\
    \sum_{t \in T} x_{j,t} = 1; & \ \forall j \in J
\\
\end{align*}
\end{subequations}

\subsubsection*{Comparison MILP and CP}

\begin{table}[!htb]
    \centering
    \begin{tabular}{l|l|l|l}
        Instance folder & \# Instances & Average time CP & Average of time MIP \\ \hline
        j10 & 50 & 0,02 & 0,41 \\ 
        j20 & 50 & 24,34 & 17,04 \\ 
        j30 & 50 & 1,28 & 68,58 \\ 
        ubo50 & 50 & 75,09 & 210,96 \\ 
        ubo100 & 50 &135,30	& 510,75 \\ \
    \end{tabular}
    \caption{Runtime CP vs MILP, timelimit set to 600 seconds.}
\end{table}

\begin{table}[!htb]
    \centering
    \begin{tabular}{c|c|cc}
        Instance folder & \# Instances & \# Count CP & \# Count MIP  \\ \hline
        j10 & 50 & 0 & 0 \\ 
        j20 & 50& 2 & 0 \\ 
        j30& 50  & 0 & 5 \\ 
        ubo50 & 50 & 6 & 14 \\ 
        ubo100 & 50 & 11 & 39 \\ 
    \end{tabular}
    \caption{Count hit timelimit (10 minutes) CP vs MILP}
\end{table}

\subsection{Time Limit for CP on Deterministic Instances}
To understand how the time limit may affect the results, we first consider the deterministic instances of RCPSP/max.
\cite{fu2016proactive} reported on their time budget that \textit{ "$\soru$ was able to obtain solutions within 5 minutes for every one instance in J10 and 2 hrs for the J20 instances. However, for J30, we were unable to get optimal solutions for certain instances in the cut-off limit of 3 hrs. On the other hand, $\soruh$ was able to generate solutions for J10 instances within half of a second, J20 instances within 10 seconds, and J30 instances within 10 minutes on average."}

In our research, we extend the instance sets with ubo50 and ubo100, for which the time limit can become more crucial. For each set (j10-ubo100), we fixed the first 50 instances from the PSPlib \cite{Kolisch1996PSPLIBA} (PSP1 - PSP50). For j10-j30, the IBM CP Optimizer could solve all instances (PSP1-PSP50)  within 60 seconds to optimality or prove infeasibility. 

For the ubo50 instances PSP1 to PSP10 and the ubo100 instances PSP1 to PSP10, the effect of the time limits of 60 seconds, 600 seconds, and 3600 seconds respectively, is presented in Table~\ref{tab:time_limit} and~\ref{tab:my_label}. We observe that the solver status does not change when increasing the time limit to 600 seconds and only flips 1 instance from feasible to infeasible when increasing the time limit from 600 seconds to 3600 seconds. The makespan only improves significantly for the ubo100 instances for higher time limits. In the remaining experiments, we fixed the time limit to 60 seconds for solving deterministic CPs.
 
\begin{table}[htb!]
\centering
        \scalebox{1}{
    \begin{tabular}{c|c|c|c|c|c}
      Instance set & Time limit & Feasible & Infeasible &Optimal & Unkown  \\ \hline
       ubo50   & 60s & 3 & 3 & 2& 2\\ 
      ubo50   & 600s &3 & 3& 2& 2\\ 
        ubo50   & 3600s & 2&3 & 3& 2\\ \hline
       ubo100   & 60s &4 & 5& 0& 1\\ 
      ubo100   & 600s & 4& 5& 0& 1\\ 
        ubo100   & 3600s & 4& 5& 0&1 \\
    \end{tabular}}
    \caption{Count per solver status IBM CP solver after different time limits on deterministic instances.}
    \label{tab:time_limit}
\end{table}

\begin{table}[htb!]
    \centering
    \begin{tabular}{c|c|c|c}
       &  60s & 600s & 3600s   \\ \hline
      ubo50  &  204 & 200  & 200 \\
       ubo100  & 421 & 410 & 410 \\
    \end{tabular}
    \caption{Average makespan (filtered feasible and optimal solutions on ubo50 and ubo100 set.}
    \label{tab:my_label}
\end{table}

\subsection{Proactive Approach}
The sample selection process is expected to influence the performance of the proactive method. We used a subset of the j10 and ubo50 instances (for both sets $PSP\_1$-$PSP\_20$, and used 10 duration samples per instance. We fixed the time limit to $60$ seconds and compared the feasibility ratios in Table \ref{tab:proactive_quantile}. We found that the higher feasibility ratios were obtained using two settings for the proactive method, being 1 sample with $\gamma=0.9$, to which we will refer with $\proactive_{0.9}$, and the setting with the smart samples to which we will refer with $\saa$ in the remaining of our experiments. For the SAA with four samples (smart samples), we investigated the effect of the time limit on the makespan in Table \ref{tab:saa_time_limit}. We observed that the makespan would still improve while making the step from 600 seconds to 3600 seconds. We decide to use a time limit of 1800 seconds instead of 3600 seconds in the remaining of the experiments to be able to conduct more experiments.

\begin{table}[!htb]
    \centering
        \scalebox{1}{
    \begin{tabular}{l|llll}
     1 sample   & $\gamma=$0.5 & $\gamma=$0.75 & $\gamma=$0.9 & $\gamma=$1 \\ \hline
       j10 &   0.11  & 0.53 & 0.94 & 0.75  \\ 
       ubo50 & 0  & 0.01  & 0.92 & 0.80   \\ \hline
     multiple samples       & n=5 & n=10 & n=25 & n=50  \\ \hline
    j10 & 0.63	& 0.78	& 0.94	& 0.94 \\
   ubo50 &   0.12	& 0.48 &	0.89	 & 0.91 \\ \hline
        smart samples*   & n=4  &    \\ \hline
    j10 & 0.94	&  \\
   ubo50 &   0.92	
    \end{tabular}}
   \caption{Feasibility ratio for different $\gamma$-quantiles in proactive approach. Each cell counts hits out of 200 experiments. * Uses a combination of smart samples that are the quantiles with $\gamma \in [0.25, 0.5, 0.75, 0.9]$} \label{tab:proactive_quantile}
\end{table}

\begin{table}[htb!]
    \centering
    \begin{tabular}{c|c|c|c}
       &  60s & 600s & 3600s   \\ \hline
      ubo50  &  233 & 232  & 231 \\
       ubo100  & 485 & 480 & 472 \\
    \end{tabular}
    \caption{Average makespan obtained with $\saa$ with four scenarios for different time limits (filtered feasible and optimal solutions on ubo50 and ubo100 set.}
    \label{tab:saa_time_limit}
\end{table}

\subsection{Reactive Approach}
First, we observed the effect of the duration estimations on the performance of $\reactive$. We used a subset of the j10 and ubo50 instances (for both sets $PSP\_1$-$PSP\_20$, and used 10 duration samples per instance. We fixed the time limit for the initial schedule to $60$ seconds and for the rescheduling to $2$ seconds.

\begin{table}[!htb]
    \centering
        \scalebox{1}{
    \begin{tabular}{l|llll}
       & $\gamma=$0.5 & $\gamma=$0.75 & $\gamma=$0.9 & $\gamma=$1 \\ \hline
       j10 &   0.11  & 0.53 & 0.94 & 0.75  \\ 
       ubo50 & 0  & 0.01  & 0.92 & 0.80   \\ \hline
    \end{tabular}}
   \caption{Feasibility ratio for different $\gamma$-quantiles in reactive approach. Each cell counts hits out of 200 experiments.} \label{tab:reactive_quantile}
\end{table}

Remarkably, we observe similar feasibility ratios to the proactive approach, indicating that for feasibility the initial schedule is quite important. For the final evaluation, we fixed $\gamma=0.9$ for $\reactive$, and will analyze how the solution quality improves with the rescheduling procedure compared to a standard proactive approach. 

Next, we observe the effect of the time limit for rescheduling.  We used the same subset of the j10 and ubo50 instances (for both sets $PSP\_1$-$PSP\_20$, used 10 duration samples per instance, and runtime limits of 1, 2, 10 and 30 seconds. The results (in Table~\ref{tab:reactive_quantile}) are almost similar for the different time limits, and this might be because the solver finishes already before the time limit, and the increase in time online has mainly to do with the number of solver calls. In the experimental evaluation, we therefore fixed the rescheduling time limit to $2$ seconds, which we expected to be sufficient for larger or slightly more complicated problems. 

\begin{table}[!htb]
    \centering
        \scalebox{1}{
    \begin{tabular}{l|l|llll}
     &  Time limit rescheduling & 1s &  2s & 10s & 30s \\ \hline
       j10  & makespan & 38  & 38  & 38   & 39\\
       j10  & time online & 0.03 & 0.04  & 0.04  & 0.04  \\ 
          ubo50 & makespan  & 171   & 171 & 171 & 172 \\
       ubo 50 & time online & 15.4 & 15.1  & 15.1  & 15.4  \\ 
    \end{tabular}}
   \caption{Average makespan and time online for double hits on different time limits for rescheduling. Each cell averages over double hits out of 200 experiments.} \label{tab:reactive_timelimit}
\end{table}

\subsection{Hyperparameters Selection}

This subsection presents the hyperparameters in Table \ref{tab:hyperparameters} that are used in the final experiments that are also presented in the main paper. 
\begin{table}[htb!]
    \centering
     \scalebox{1}{
    \begin{tabular}{l|l|l|l|l} 
       Hyperparameter  &  $\saa$ & $\proactive_{0.9}$ & $\stnu$ & $\reactive$ \\ \hline
        $\gamma$ & [0.25, 0.5, 0.75, 0.7]  & 0.9 & 1  & 0.9  \\
        Time limit CP & 1800s & 600s & 600s & 600s and 2s \\
        Solver & IBM CP  & IBM CP & IBM CP & IBM CP
    \end{tabular}}
    \caption{Hyperparameter settings.}
    \label{tab:hyperparameters}
\end{table}

\section{Statistical Tests}\label{app:statistical}
\subsection{Wilcoxon Test}
The Wilcoxon test that we use follows \cite{cureton1967normal} and is described below: 

\begin{itemize}
    \item Collect a set of matched pairs (the results from two different methods on one instance sample). 
    \item Compute the difference between the two test results for each pair.
    \subitem An important remark is that because of the discrete objective values (makespan), we can obtain a zero difference when there is a tie, we use Pratt's procedure for handling ties \cite{pratt1959remarks}, which includes zero-differences in the ranking process, but drops the ranks of the zeros afterward.
    \item Order the pairs according to the absolute values of the differences.
    \item Assign ranks to the pairs based on these absolute values.
    \item Sum the positive ranks ($T_{pos})$ and the negative ($T_{neg})$ ranks separately. 
    \item Take the smaller of the two $T=min(T_{pos}, T_{neg})$.
    \item If the two methods have no consistent difference in their relative performances, then the rank sums should be approximately equal. This is tested with a normal approximation for the Wilcoxon statistic, which is outlined by \cite{cureton1967normal}. \cite{cureton1967normal} propose a corrected normal approximation, which is needed because of the usage of the Pratt procedure for handling zero differences. 
     \item The normalized Z-statistic is given by the formula: $z = (T - d) / se$, where $d$ is the continuity correction from \cite{cureton1967normal}, and $se$ is the standard error. 
\end{itemize}
All of the above can be executed using the Python package SciPy \cite{2020SciPy-NMeth} built-in method $scipy.stats.wilcoxon$ that is called with parameters $method$="approx", "$zero\_method$="pratt", and $correction$=True.

\subsection{Z-test for Proportion (Binomial Distribution)}
We use \cite{100tests} as a reference for the Z-test for Proportion. It is important to mention that this test is approximate as it assumes that the number of observations justifies a normal approximation for the binomial. (In contradiction to the SciPy package and its built-in method $scipy.stats.binomtest$ containing an exact test). 

The proportion test investigates whether the is a significant difference between the assumed proportion of wins $p_0$ and the observed proportion of wins $p$. In our analysis, two methods are compared, and the number of wins for each method is counted based on one metric. 

The procedure for the proportion test is as follows:
\begin{itemize}
    \item Collect a set of matched pairs (the results from two different methods on one atomic instance form a pair). 
    \item For each pair, determine which method wins, and count the wins for both methods. Exclude all ties.
    \item Calculate the ratio of wins for one of the two methods.
    \item Test where this ratio differs significantly from $p_0=0.5$ (equal probability of winning).
    \begin{itemize}
        \item The test statistic is $Z=\frac{|p-p_0|-\frac{1}{2n}}{\{ \frac{p_0(1-p_0)}{n}^{1/2} \}}$
        \item The term $\frac{1}{2n}$ in the numerator is a discontinuity correction.
        \item For a two-sided test with a significance level $\alpha=0.05$ the acceptance region for the null hypothesis is $-1.96 < Z < 1.96$.
    \end{itemize}
\end{itemize}.

\subsection{Magnitude Test}
The magnitude test we use is a t-test for two population means, or the method of paired comparisons such as Test 10 in the book by \cite{100tests}. The test is whether there is a significant difference between the two population means. The procedure for this paired comparison t-test is as follows:
\begin{itemize}
    \item Collect a set of matched pairs (so the results from two different methods on one atomic instance form a pair). 
    \item Normalise the performances for each pair by computing the mean value of the pair and dividing the two items in the pair by the pairs' mean such that all normalized observations will be between 0 and 2, and 1 indicates a tie.
    \item Compute the differences $d_1$ between the two test results for each pair $i$.
    \item Compute the variances of the differences with the following formula: $s^2=\sum_{i=1}^n \frac{(d_i-\bar{d})^2}{n-1}.$
    \item Compute the means of both methods $\bar{x_1}$ and $\bar{x_2}$.  
    \item Compute the t-statistic using the formula: $t=\frac{\bar{x_1}-\bar{x_2}}{s/n^{1/2}}$
    \item Test significance by checking whether $t$ lies within the acceptance region for which the values are given by the Student's t-distribution (two-sided) with $n-1$ degrees of freedom.
\end{itemize}

After the normalization step, it is possible to execute the test with the Scipy package and, specifically, its built-in method $scipy.stats.ttest\_rel$.

\section{Results}
Please find the results of the statistical test in this section. In Section \ref{app:feasibility}, we discuss the feasibility rations. In Section \ref{app:partial_orderings}, we present the visualizations of the partial orderings. In Section \ref{app:magnitude}, we present the results of the magnitude tests on the double hits that can give some insight into the magnitude differences of the performance metrics between the different methods.

The test results from the Wilcoxon test and the proportion tests are used to make figures for the partial orderings, and the complete tables with the results can be found in the following tables at the end of this document. 
\begin{itemize}
    \item Table \ref{tab:obj_pairwise_1} for noise level $\epsilon=1$ and Table \ref{tab:obj_pairwise_2} for noise level $\epsilon=2$: including the results of the Wilcoxon and proportion tests on solution quality.
    \item Table \ref{tab:offline_pairwise_1} for noise level $\epsilon=1$ and Table \ref{tab:offline_pairwise_2} for noise level $\epsilon=2$: including the results of the Wilcoxon and proportion tests on time offline.
    \item Table \ref{tab:online_pairwise_1} for noise level $\epsilon=1$ and Table \ref{tab:online_pairwise_2} for noise level $\epsilon=2$: including the results of the Wilcoxon and proportion tests on online computation time.
\end{itemize}

\subsection{Feasibility Ratio}\label{app:feasibility}
First, we analyze the feasibility ratio obtained by the different methods. For $\epsilon=1$, we observe the feasibility ratio obtained by $\saa$, $\proactive_{0.9}$, $\reactive$ are the same for instance sets j10 - ubo50, and only for ubo100 $\reactive$ is a little bit lower than the $\saa$, $\proactive_{0.9}$. The $\stnu$ method has the lowest feasibility rate, but remarkably, the difference becomes smaller when the size of the problem grows (the difference is the smallest for ubo100).  

We observe a different pattern for $\epsilon=2$, with a higher noise factor. The highest feasibility ratios are obtained by $\stnu$. This could be explained by the fact that the $\stnu$ method uses the information about the distribution and is, therefore, better at handling the larger variances in activity duration.

\begin{table}[htb!]
    \centering
\begin{tabular}{l|rrrr}
 Set &  $\stnu$ &  $\saa$ &  $\proactive_{0.9}$ &  $\reactive$ \\ \hline
j10             &  0.65 &           0.85 &           0.85 &      0.85 \\
j20             &  0.65 &           0.76 &           0.76 &      0.76 \\
j30             &  0.78 &           0.89 &           0.89 &      0.89 \\
ubo50           &  0.77 &           0.86 &           0.86 &      0.86 \\
ubo100          &  0.84 &           0.91 &           0.91 &      0.88 \\
\end{tabular}
 \caption{Feasibility ratio for noise factor $\epsilon=1$ We use abbreviations for the proactive approach with $\gamma=0.9$ ($\pro_{0.9}$), and the SAA variant ($\saashort$).}
     \label{tab:feasible_1}
\end{table}

\begin{table}[htb!]
    \centering
\begin{tabular}{l|rrrr}
 Set &  $\stnu$ &  $\saa$ &  $\proactive_{0.9}$ &  $\reactive$ \\ \hline
j10             &  0.63 &           0.63 &           0.64 &      0.63 \\
j20             &  0.63 &           0.54 &           0.53 &      0.53 \\
j30             &  0.63 &           0.51 &           0.48 &      0.49 \\
ubo50           &  0.67 &           0.41 &           0.42 &      0.41 \\
ubo100          &  0.79 &           0.35 &           0.36 &      0.33 \\
\end{tabular}
 \caption{Feasibility ratio for noise factor $\epsilon=2$ We use abbreviations for the proactive approach with $\gamma=0.9$ ($\pro_{0.9}$), and the SAA variant ($\saashort$).}
     \label{tab:feasible_2}
\end{table}

\subsection{Partial Orderings}\label{app:partial_orderings}
\subsubsection{Partial Ordering Solution Quality}

Next, we analyze the test results based on the Wilcoxon test, proportion test, and magnitude test for solution quality. We translate the test results into partial orderings that are visualized with solid arrows indicating a significant difference based on the Wilcoxon test and dashed arrows indicating a significant difference based on the proportion test for the pairs where the Wilcoxon test did not find significant results. Furthermore, for all pairs where we find a significant difference according to Wilcoxon, or the proportion test, we analyze whether we can find a significant magnitude difference on the double hits. 
\begin{itemize}
\item First, we discuss partial orderings for the noise level $\epsilon=1$, which are visualized in Figure~\ref{fig:partial_ordering_quality_1}. For $\epsilon=1$, the partial ordering shows the same pattern across the different instance sets. However, there are a few exceptions: 
\begin{itemize}
    \item In some cases we can only show a dashed arrow, indicating a weaker partial ordering, that is a significant proportional difference (proportion of wins) instead of a significant difference obtained by Wilcoxon. 
    \item Only for ubo50, all arrows are solid, indicating that for all pairs, we found a significant difference according to the Wilcoxon test. 
    \item For ubo100, there is no significant difference between $\saa$ and $\proactive_{0.9}$. Possibly, for the larger instances, the SAA becomes more difficult to solve, for which reason it is not better than the heuristic $\gamma$-quantile procedure anymore.
\end{itemize}
\item Then, we analyze the partial orderings for the noise level $\epsilon=2$, in Figure~\ref{fig:partial_ordering_quality_2}. Here there are a few things to observe:
\begin{itemize} 
    \item We observe a shift in the partial orderings for $\epsilon=2$.
    \item For j10-j20, the partial ordering is similar to $\epsilon=1$, although for $\epsilon=2$, all arrows are solid (indicating a relation obtained by the Wilcoxon test).
    \item For $j30$, there is no significant difference between $\proactive_{0.9}$ and $\saa$, neither with Wilcoxon and neither proportionally. 
    \item The partial orderings for $j30$ and $ubo50$ are similar. 
    \item For ubo100, we observe the $\stnu$ is significantly better than the three other methods, although no significant difference can be found between the three other methods.
\end{itemize}
\item The main conclusion is that in all situations the $\stnu$ shows to be the outperforming method based on solution quality. In general, the $\reactive$ shows to outperform the \proactive methods. Furthermore, $\saa$ outperforms in many cases $\proactive_{0.9}$, although for larger instances and a higher noise level this difference is not present anymore.
\end{itemize} 

\begin{figure}[htb!]
\centering
    \begin{tikzpicture}
\begin{scope}[every node/.style={}]
    \node (A) at (0,0) {\stnu};
    \node (B) at (2,0) {\reactive};
    \node (C) at (5,0) {\saa};
    \node (D) at (8,0) {\proheur};
   \node (instance) at (12,0) {j10};
\end{scope}

\begin{scope}[>={Stealth[black]},
              every node/.style={},
              every edge/.style={draw=black}]
    \path [->,dashed] (A) edge node {} (B);
    \path [->,dashed] (A) edge[bend right=10] node {} (C);
    \path [->] (B) edge node {} (C);
    \path [->] (B) edge[bend left=10] node {} (D);
    \path [->] (C) edge node {} (D);
        \path [->,dashed] (A) edge[bend right=10] node {} (D);
\end{scope}
\end{tikzpicture}

    \begin{tikzpicture}
\begin{scope}[every node/.style={}]
    \node (A) at (0,0) {\stnu};
    \node (B) at (2,0) {\reactive};
    \node (C) at (5,0) {\saa};
    \node (D) at (8,0) {\proheur};
     \node (instance) at (12,0) {j20};
\end{scope}

\begin{scope}[>={Stealth[black]},
              every node/.style={},
              every edge/.style={draw=black}]
    \path [->,dashed] (A) edge node {} (B);
    \path [->] (A) edge[bend right=10] node {} (C);
    \path [->] (B) edge node {} (C);
    \path [->] (B) edge[bend left=10] node {} (D);
    \path [->] (C) edge node {} (D);
        \path [->] (A) edge[bend right=10] node {} (D);
\end{scope}
\end{tikzpicture}

    \begin{tikzpicture}
\begin{scope}[every node/.style={}]
    \node (A) at (0,0) {\stnu};
    \node (B) at (2,0) {\reactive};
    \node (C) at (5,0) {\saa};
    \node (D) at (8,0) {\proheur};
    \node (instance) at (12,0) {j30};
\end{scope}

\begin{scope}[>={Stealth[black]},
              every node/.style={},
              every edge/.style={draw=black}]
    \path [->, dashed] (A) edge node {} (B);
    \path [->, dashed] (A) edge[bend right=10] node {} (C);
    \path [->] (B) edge node {} (C);
    \path [->] (B) edge[bend left=10] node {} (D);
    \path [->] (C) edge node {} (D);
        \path [->] (A) edge[bend right=10] node {} (D);
\end{scope}
\end{tikzpicture}

    \begin{tikzpicture}
\begin{scope}[every node/.style={}]
    \node (A) at (0,0) {\stnu};
    \node (B) at (2,0) {\reactive};
    \node (C) at (5,0) {\saa};
    \node (D) at (8,0) {\proheur};
   \node (instance) at (12,0) {ubo50};
\end{scope}

\begin{scope}[>={Stealth[black]},
              every node/.style={},
              every edge/.style={draw=black}]
    \path [->] (A) edge node {} (B);
    \path [->] (A) edge[bend right=10] node {} (C);
    \path [->] (B) edge node {} (C);
    \path [->] (B) edge[bend left=10] node {} (D);
    \path [->] (C) edge node {} (D);
        \path [->] (A) edge[bend right=10] node {} (D);
\end{scope}
\end{tikzpicture}

    \begin{tikzpicture}
\begin{scope}[every node/.style={}]
    \node (A) at (0,0) {\stnu};
    \node (B) at (2,0) {\reactive};
    \node (C) at (5,0) {\saa};
    \node (D) at (8,0) {\proheur};
     \node (instance) at (12,0) {ubo100};
\end{scope}

\begin{scope}[>={Stealth[black]},
              every node/.style={},
              every edge/.style={draw=black}]
    \path [->] (A) edge node {} (B);
    \path [->] (B) edge node {} (C);
\end{scope}
\end{tikzpicture}
    \caption{Partial ordering on solution quality for $\epsilon=1$. The solid arrow indicates a significant difference obtained by the Wilcoxon test, and the dashed arrow shows only a proportional difference.}
    \label{fig:partial_ordering_quality_1}
\end{figure}

\begin{figure}[htb!]
\centering
    \begin{tikzpicture}
\begin{scope}[every node/.style={}]
    \node (A) at (0,0) {\stnu};
    \node (B) at (2,0) {\reactive};
    \node (C) at (5,0) {\saa};
    \node (D) at (8,0) {\proheur};
   \node (instance) at (12,0) {j10};
\end{scope}

\begin{scope}[>={Stealth[black]},
              every node/.style={},
              every edge/.style={draw=black}]
    \path [->] (A) edge node {} (B);
    \path [->] (A) edge[bend right=10] node {} (C);
    \path [->] (B) edge node {} (C);
    \path [->] (B) edge[bend left=10] node {} (D);
    \path [->] (C) edge node {} (D);
        \path [->] (A) edge[bend right=10] node {} (D);
\end{scope}
\end{tikzpicture}

    \begin{tikzpicture}
\begin{scope}[every node/.style={}]
    \node (A) at (0,0) {\stnu};
    \node (B) at (2,0) {\reactive};
    \node (C) at (5,0) {\saa};
    \node (D) at (8,0) {\proheur};
     \node (instance) at (12,0) {j20};
\end{scope}

\begin{scope}[>={Stealth[black]},
              every node/.style={},
              every edge/.style={draw=black}]
    \path [->] (A) edge node {} (B);
    \path [->] (A) edge[bend right=10] node {} (C);
    \path [->] (B) edge node {} (C);
    \path [->] (B) edge[bend left=10] node {} (D);
    \path [->] (C) edge node {} (D);
        \path [->] (A) edge[bend right=10] node {} (D);
\end{scope}
\end{tikzpicture}

    \begin{tikzpicture}
\begin{scope}[every node/.style={}]
    \node (A) at (0,0) {\stnu};
    \node (B) at (2,0) {\reactive};
    \node (C) at (5,0-0.5) {\saa};
    \node (D) at (5,0.5) {\proheur};
    \node (instance) at (12,0) {j30};
\end{scope}

\begin{scope}[>={Stealth[black]},
              every node/.style={},
              every edge/.style={draw=black}]
    \path [->] (A) edge node {} (B);
    \path [->] (A) edge node {} (C);
    \path [->] (B) edge node {} (C);
    \path [->] (B) edge node {} (D);
        \path [->] (A) edge node {} (D);
\end{scope}
\end{tikzpicture}

    \begin{tikzpicture}
\begin{scope}[every node/.style={}]
    \node (A) at (0,0) {\stnu};
    \node (B) at (2,0) {\reactive};
    \node (C) at (5,0-0.5) {\saa};
    \node (D) at (5,0.5) {\proheur};
    \node (instance) at (12,0) {ubo50};
\end{scope}

\begin{scope}[>={Stealth[black]},
              every node/.style={},
              every edge/.style={draw=black}]
    \path [->] (A) edge node {} (B);
    \path [->] (A) edge node {} (C);
    \path [->] (B) edge node {} (C);
    \path [->] (B) edge node {} (D);
        \path [->] (A) edge node {} (D);
\end{scope}
\end{tikzpicture}

    \begin{tikzpicture}
\begin{scope}[every node/.style={}]
    \node (A) at (0,0) {\stnu};
    \node (B) at (2.8,-0.5) {\reactive};
    \node (C) at (3.1,0) {\saa};
    \node (D) at (3,0.5) {\proheur};
     \node (instance) at (12,0) {ubo100};
\end{scope}

\begin{scope}[>={Stealth[black]},
              every node/.style={},
              every edge/.style={draw=black}]
    \path [->] (A) edge node {} (B);
    \path [->] (A) edge node {} (C);
     \path [->] (A) edge node {} (D);
\end{scope}
\end{tikzpicture}
    \caption{Partial ordering on solution quality for $\epsilon=2$. Solid arrow indicating a significant difference obtained by the Wilcoxon test, and dashed arrow only a proportional difference.}
    \label{fig:partial_ordering_quality_2}
\end{figure}

\subsubsection{Partial Ordering Time Offline}

\begin{itemize}
\item We expected the $\proactive_{0.9}$ and $\reactive$ to be the fastest offline, followed by the $\saa$ and $\stnu$, where the partial ordering between the latter two depends on the problem size. When two methods have (almost) only ties (such as $\proactive_{0.9}$ and $\reactive$), not all tests can be executed, and a $nan$ will appear. 
\item Importantly, the tests are designed in such a way that infeasibilities are also weighted in significance testing. When a method results in an infeasible solution, infinite offline time is assigned to this experiment. For that reason, it can still occur that we find a difference between $\reactive$ and $\proactive_{0.9}$: although they have the same offline procedure, one of the two can still fail, which results in infinite offline time.
\item In general, we give preference to the test results of the Wilcoxon test because this test shows a stronger relation than the proportional test, but to fill in the missing information, we use the proportion test to test if there is a significant difference in the proportion of wins. We noted, however, that the test outcomes of the two tests can sometimes be contradicting. The Wilcoxon test penalizes infeasibilities more severely, while the proportion test only considers the number of wins. Consequently, it is possible that method 1 produces more infeasible solutions but achieves better metrics, whereas method 2 generates fewer infeasible solutions but worse metrics, resulting in the Wilcoxon test on instances where at least one feasible solution exists tells us method 2 performs better (infeasible solutions are penalized more heavily). Proportion test on instances where at least one feasible solution exists tells us method 1 performs better. The magnitude test on double hits tells us Method 1 performs better as it has better metric values.
\item Now, we observe the different partial orderings obtained for $\epsilon=1$:
\begin{itemize}
  \item In general, the time spent offline for the $\proactive_{0.9}$ and the $\reactive$ is exactly the same, therefore in most cases we also did not find a significant difference between the two. However, especially for the ubo100 instances, we found that sometimes $\proactive_{0.9}$ had more feasible solutions, resulting in a partial ordering of $\proactive_{0.9} \rightarrow \reactive$.
  \item We observe that for j10 and j20 $\saa$ is consistently faster offline than $\stnu$, but this flips for j30 - ubo100, where the $\stnu$ becomes faster. This can be explained by the fact that the solve time of the $SAA$ can grow exponentially, while the DC-checking algorithm is a polynomial time algorithm. 
  \end{itemize}
\item For noise level $\epsilon=2$, we observe in Figure~\ref{fig:partial_ordering_offline_2}: 
\begin{itemize}
    \item For j10 and j20, the partial ordering is almost the same as for the $\epsilon=1$ level, although the flipping already occurs at j20, where the $\stnu$ becomes faster than the $\saa$.
    \item Surprisingly, the pattern changes for the ubo50 and ubo100 instances. We observe that the best performance on time offline is obtained by the $\stnu$ according to the Wilcoxon. We found that this is mainly caused by the much higher feasibility ratio of the $\stnu$ method. 
    \item Furthermore, we find that the $\proactive_{0.9}$ becomes better than $\reactive$ because of the higher feasibility ratio again (as the two methods have the same offline procedure). 
    \item Again, for the larger instances sets (j20-ubo100), the $\saa$ has the worst relative computation time offline.
\end{itemize}
\end{itemize}

\begin{figure}[htb!]
    \centering
\begin{tikzpicture}
\begin{scope}[every node/.style={}]
    \node (stnu) at (5,0.6) {\stnu};
    \node (rea) at (0,0) {\reactive};
    \node (saa) at (3,0.5) {\saa};
    \node (pro) at (0,1) {\proheur};
    \node (instance) at (9,0) {j10};
\end{scope}

\begin{scope}[>={Stealth[black]},
              every node/.style={},
              every edge/.style={draw=black}]
    \path [->] (rea) edge node {} (saa);
    \path [->] (pro) edge node {} (saa);
    \path [->] (saa) edge node[midway, above] {} (stnu);
    \path [->] (rea) edge[bend right=10] node {} (stnu);
    \path [->] (pro) edge[bend left=10] node {} (stnu); 
\end{scope}
\end{tikzpicture}

\begin{tikzpicture}
\begin{scope}[every node/.style={}]
    \node (stnu) at (5,0.6) {\stnu};
    \node (rea) at (0,0) {\reactive};
    \node (saa) at (3,0.5) {\saa};
    \node (pro) at (0,1) {\proheur};
    \node (instance) at (9,0) {j20};
\end{scope}

\begin{scope}[>={Stealth[black]},
              every node/.style={},
              every edge/.style={draw=black}]
    \path [->] (rea) edge node {} (saa);
    \path [->] (pro) edge node {} (saa);
    \path [->] (saa) edge node[midway, above] {} (stnu);
    \path [->] (rea) edge[bend right=10] node {} (stnu);
    \path [->] (pro) edge[bend left=10] node {} (stnu); 
\end{scope}
\end{tikzpicture}

\begin{tikzpicture}
\begin{scope}[every node/.style={}]
    \node (stnu) at (3,0.5) {\stnu};
    \node (rea) at (0,0) {\reactive};
    \node (saa) at (5,0.5) {\saa};
    \node (pro) at (0,1) {\proheur};
    \node (instance) at (9,0) {j30};
\end{scope}

\begin{scope}[>={Stealth[black]},
              every node/.style={},
              every edge/.style={draw=black}]
    \path [->] (rea) edge[bend right=10] node {} (saa);
    \path [->] (pro) edge[bend left=10]  node {} (saa);
    \path [->] (stnu) edge node {} (saa);
    \path [->] (rea) edge node {} (stnu);
    \path [->] (pro) edge node {} (stnu); 
\end{scope}
\end{tikzpicture}

\begin{tikzpicture}
\begin{scope}[every node/.style={}]
    \node (stnu) at (3,0.5) {\stnu};
    \node (rea) at (0,0) {\reactive};
    \node (saa) at (5,0.5) {\saa};
    \node (pro) at (0,1) {\proheur};
    \node (instance) at (9,0) {ubo50};
\end{scope}

\begin{scope}[>={Stealth[black]},
              every node/.style={},
              every edge/.style={draw=black}]
    \path [->] (rea) edge[bend right=10] node {} (saa);
    \path [->] (pro) edge[bend left=10]  node {} (saa);
    \path [->] (stnu) edge node {} (saa);
    \path [->] (rea) edge node {} (stnu);
    \path [->] (pro) edge node {} (stnu); 
\end{scope}
\end{tikzpicture}

\begin{tikzpicture}
\begin{scope}[every node/.style={}]
    \node (stnu) at (3,0.5) {\stnu};
    \node (rea) at (0,0) {\reactive};
    \node (saa) at (5,0.5) {\saa};
    \node (pro) at (0,1) {\proheur};
    \node (instance) at (9,0) {ubo100};
\end{scope}

\begin{scope}[>={Stealth[black]},
              every node/.style={},
              every edge/.style={draw=black}]
    \path [->] (rea) edge[bend right=10] node {} (saa);
    \path [->] (pro) edge[bend left=10]  node {} (saa);
    \path [->] (stnu) edge node {} (saa);
    \path [->] (rea) edge node {} (stnu);
    \path [->] (pro) edge node {} (stnu); 
     \path [->] (rea) edge node {} (pro); 
\end{scope}
\end{tikzpicture}
    \caption{Partial ordering time offline $\epsilon=1$. Solid arrow indicating a significant difference obtained by the Wilcoxon test, and dashed arrow only a proportional difference.}
    \label{fig:partial_ordering_offline_1}
\end{figure}

\begin{figure}[htb!]
    \centering
\begin{tikzpicture}
\begin{scope}[every node/.style={}]
    \node (stnu) at (5,0.6) {\stnu};
    \node (rea) at (0,0) {\reactive};
    \node (saa) at (3,0.5) {\saa};
    \node (pro) at (0,1) {\proheur};
    \node (instance) at (9,0) {j10};
\end{scope}

\begin{scope}[>={Stealth[black]},
              every node/.style={},
              every edge/.style={draw=black}]
    \path [->] (rea) edge node {} (saa);
    \path [->] (pro) edge node {} (saa);
    \path [->] (saa) edge node[midway, above] {} (stnu);
    \path [->] (rea) edge[bend right=10] node {} (stnu);
    \path [->] (pro) edge[bend left=10] node {} (stnu); 
\end{scope}
\end{tikzpicture}

\begin{tikzpicture}
\begin{scope}[every node/.style={}]
    \node (stnu) at (5,0.6) {\stnu};
    \node (rea) at (0,0) {\reactive};
    \node (saa) at (3,0.5) {\saa};
    \node (pro) at (0,1) {\proheur};
    \node (instance) at (9,0) {j20};
\end{scope}

\begin{scope}[>={Stealth[black]},
              every node/.style={},
              every edge/.style={draw=black}]
    \path [->] (rea) edge node {} (saa);
    \path [->] (pro) edge node {} (saa);
    \path [->] (saa) edge node[midway, above] {} (stnu);
    \path [->] (rea) edge[bend right=10] node {} (stnu);
    \path [->] (pro) edge[bend left=10] node {} (stnu); 
\end{scope}
\end{tikzpicture}

\begin{tikzpicture}
\begin{scope}[every node/.style={}]
    \node (stnu) at (3,0.5) {\stnu};
    \node (rea) at (0,0) {\reactive};
    \node (saa) at (5,0.5) {\saa};
    \node (pro) at (0,1) {\proheur};
    \node (instance) at (9,0) {j30};
\end{scope}

\begin{scope}[>={Stealth[black]},
              every node/.style={},
              every edge/.style={draw=black}]
    \path [->] (rea) edge[bend right=10] node {} (saa);
    \path [->] (pro) edge[bend left=10]  node {} (saa);
    \path [->] (stnu) edge node {} (saa);
    \path [->, dashed] (rea) edge node {} (stnu);
    \path [->] (pro) edge node {} (stnu); 
\end{scope}
\end{tikzpicture}

\begin{tikzpicture}
    
\begin{scope}[every node/.style={}]
    \node (A) at (0,0) {\stnu};
    \node (B) at (2,0) {\proheur};
    \node (C) at (4.5,0) {\reactive};
    \node (D) at (7,0) {\saa};
   \node (instance) at (9,0) {ubo50};
\end{scope}

\begin{scope}[>={Stealth[black]},
              every node/.style={},
              every edge/.style={draw=black}]
    \path [->] (A) edge node {} (B);
    \path [->] (A) edge[bend right=10] node {} (C);
    \path [->] (B) edge node {} (C);
    \path [->] (B) edge[bend left=10] node {} (D);
    \path [->] (C) edge node {} (D);
        \path [->] (A) edge[bend right=10] node {} (D);
\end{scope}
\end{tikzpicture}

\begin{tikzpicture}
    
\begin{scope}[every node/.style={}]
    \node (A) at (0,0) {\stnu};
    \node (B) at (2,0) {\proheur};
    \node (C) at (4.5,0) {\reactive};
    \node (D) at (7,0) {\saa};
   \node (instance) at (9,0) {ubo100};
\end{scope}

\begin{scope}[>={Stealth[black]},
              every node/.style={},
              every edge/.style={draw=black}]
    \path [->] (A) edge node {} (B);
    \path [->] (A) edge[bend right=10] node {} (C);
    \path [->] (B) edge node {} (C);
    \path [->] (B) edge[bend left=10] node {} (D);
    \path [->,dashed] (C) edge node {} (D);
        \path [->] (A) edge[bend right=10] node {} (D);
\end{scope}
\end{tikzpicture}
    \caption{Partial ordering time offline $\epsilon=2$. The solid arrow indicates a significant difference obtained by the Wilcoxon test, and the dashed arrow shows only a proportional difference.}
    \label{fig:partial_ordering_offline_2}
\end{figure}

\subsubsection{Partial Ordering Time Online}
We observe the results from the tests on online computation time:
\begin{itemize}
    \item For $\epsilon=1$ (Figure~\ref{fig:part_ord_online_2}), we find the same partial ordering for each instance set, which is also in line with our expectations: 
    \begin{itemize}
        \item There is no difference between $\saa$ and $\proactive_{0.9}$ as both only require a feasibility check only. 
        \item The $\stnu$ real-time execution algorithm turns out to be more efficient online than the reactive method, which comprises multiple solver calls, this is also expected.
    \end{itemize}
    \item For $\epsilon=2$ (Figure~\ref{fig:part_ord_online_2}), we find the same pattern as for $\epsilon=1$; for problem sets j10-j30. However, we find that for ubo50 and ubo100, the $\stnu$  starts to outperform other methods, explained by the higher feasibility ratio of the \stnu. 
\end{itemize}

\begin{figure}[htb!]
    \centering
   \begin{tikzpicture}
\begin{scope}[every node/.style={}]
    \node (rea) at (6,0.5) {\reactive};
    \node (saa) at (0,0) {\saa};
    \node (stnu) at (3,0.5) {\stnu};
    \node (pro) at (0,1) {\proheur};
     \node (instances) at (9,0.5) {j10-ubo100};
\end{scope}

\begin{scope}[>={Stealth[black]},
              every node/.style={},
              every edge/.style={draw=black}]
    \path [->] (saa) edge node {} (stnu);
    \path [->] (pro) edge node {} (stnu);
   \path [->] (stnu) edge node {} (rea);
   \path [->] (pro) edge node {} (rea);
   \path [->] (saa) edge node {} (rea);
\end{scope}
\end{tikzpicture}
    \caption{Partial ordering time online $\epsilon=1$, j10, j20, j30, ubo50, and ubo100. Solid arrow indicating a significant difference obtained by the Wilcoxon test, and dashed arrow only a proportional difference.}
    \label{fig:part_ord_online_1}
\end{figure}

\begin{figure}[htb!]
    \centering
   \begin{tikzpicture}
\begin{scope}[every node/.style={}]
    \node (rea) at (6,0.5) {\reactive};
    \node (saa) at (0,0) {\saa};
    \node (stnu) at (3,0.5) {\stnu};
    \node (pro) at (0,1) {\proheur};
     \node (instances) at (9,0.5) {j10-j30};
\end{scope}

\begin{scope}[>={Stealth[black]},
              every node/.style={},
              every edge/.style={draw=black}]
    \path [->] (saa) edge node {} (stnu);
    \path [->] (pro) edge node {} (stnu);
   \path [->] (stnu) edge node {} (rea);
   \path [->] (pro) edge node {} (rea);
   \path [->] (saa) edge node {} (rea);
\end{scope}
\end{tikzpicture}

   \begin{tikzpicture}
\begin{scope}[every node/.style={}]
    \node (rea) at (6,0.5) {\reactive};
    \node (saa) at (3,0) {\saa};
    \node (stnu) at (0,0.5) {\stnu};
    \node (pro) at (3,1) {\proheur};
     \node (instances) at (9,0.5) {ubo50-ubo100};
\end{scope}

\begin{scope}[>={Stealth[black]},
              every node/.style={},
              every edge/.style={draw=black}]
    \path [->] (stnu) edge node {} (saa);
    \path [->] (stnu) edge node {} (pro);
   \path [->] (stnu) edge node {} (rea);
   \path [->] (pro) edge node {} (rea);
   \path [->] (saa) edge node {} (rea);
\end{scope}
\end{tikzpicture}
  \caption{Partial ordering time online $\epsilon=2$, ubo50, and ubo100. Solid arrow indicating a significant difference obtained by the Wilcoxon test, and dashed arrow only a proportional difference.}
    \label{fig:part_ord_online_2}
\end{figure}

\subsection{Magnitude Tests}\label{app:magnitude}
In general, we observed that for the pairs of methods for which we found a significant partial ordering, we also found a significant difference in the magnitude of the different metrics. The test results for solution quality are presented in Table \ref{tab:obj_magnitude_1} ($\epsilon=1$) and \ref{tab:obj_magnitude_2} ($\epsilon=2$), for time offline in Table \ref{tab:offline_magnitude_1} ($\epsilon=1$) and \ref{tab:offline_magnitude_2} ($\epsilon=2$). 

The remarkable things that we observed in the results of the magnitude tests are highlighted per performance metric in the following subsections.

\subsubsection{Magnitude Tests Solution Quality}
In Table \ref{tab:obj_magnitude_1} and Table \ref{tab:obj_magnitude_2}, a comparison of the magnitude differences in solution quality (makespan) on the double hits is provided. Here, the normalized averages indicate how much this difference is. We see that the \stnu \ method for all pair-wise comparisons on the double hits results in the lowest normalized average of the makespan. In the comparison of $\reactive$ and $\proactive$ we see lower normalized averages for the $\reactive$ approach. Furthermore, we observe that for $\epsilon=2$ the magnitude differences become larger for the different pairs.

\begin{table*}[htb!]
    \centering
     \scalebox{0.75}{
    \begin{tabular}{l|llllll} \hline
    j10 & \stnu-$\react$ & \stnu-$\saashort$ & \stnu-$\pro_{0.9}$ & $\react$-$\saashort$ & $\react$-$\pro_{0.9}$ & $\saashort$-$\pro_{0.9}$ \\
    $[n] \ t \ (p)$  & [260] -9.367 (*)  & [260] -13.762 (*)  & [260] -13.703 (*)  & [340] -10.114 (*)  & [340] -10.542 (*)  & [340] -5.637 (*)  \\
     norm. avg. & $\stnu$: 0.974 & $\stnu$: 0.962 & $\stnu$: 0.962 & $\react$: 0.986 & $\react$: 0.985 & $\saashort$: 0.999 \\
   norm. avg.  & $\react$: 1.026 & $\saashort$: 1.038 & $\pro_{0.9}$: 1.038 & $\saashort$: 1.014 & $\pro_{0.9}$: 1.015 & $\pro_{0.9}$: 1.001 \\ \hline
    j20 & \stnu-$\react$ & \stnu-$\saashort$ & \stnu-$\pro_{0.9}$ & $\react$-$\saashort$ & $\react$-$\pro_{0.9}$ & $\saashort$-$\pro_{0.9}$ \\
    $[n] \ t \ (p)$ & [230] -7.62 (*)  & [230] -8.608 (*)  & [230] -8.992 (*)  & [270] -6.868 (*)  & [270] -7.713 (*)  & [270] -5.82 (*)  \\
    norm. avg. & $\stnu$: 0.982 & $\stnu$: 0.978 & $\stnu$: 0.977 & $\react$: 0.995 & $\react$: 0.993 & $\saashort$: 0.998 \\
    norm. avg. & $\react$: 1.018 & $\saashort$: 1.022 & $\pro_{0.9}$: 1.023 & $\saashort$: 1.005 & $\pro_{0.9}$: 1.007 & $\pro_{0.9}$: 1.002 \\ \hline
    j30 & \stnu-$\react$ & \stnu-$\saashort$ & \stnu-$\pro_{0.9}$ & $\react$-$\saashort$ & $\react$-$\pro_{0.9}$ & $\saashort$-$\pro_{0.9}$ \\
   $[n] \ t \ (p)$  & [280] -4.072 (*)  & [280] -5.946 (*)  & [280] -7.06 (*)  & [320] -5.698 (*)  & [320] -9.422 (*)  & [320] -6.226 (*)  \\
    norm. avg. & $\stnu$: 0.993 & $\stnu$: 0.99 & $\stnu$: 0.988 & $\react$: 0.997 & $\react$: 0.995 & $\saashort$: 0.998 \\
    norm. avg. & $\react$: 1.007 & $\saashort$: 1.01 & $\pro_{0.9}$: 1.012 & $\saashort$: 1.003 & $\pro_{0.9}$: 1.005 & $\pro_{0.9}$: 1.002 \\ \hline
    ubo50 & \stnu-$\react$ & \stnu-$\saashort$ & \stnu-$\pro_{0.9}$ & $\react$-$\saashort$ & $\react$-$\pro_{0.9}$ & $\saashort$-$\pro_{0.9}$ \\
    $[n] \ t \ (p)$  & [330] -11.359 (*)  & [330] -11.391 (*)  & [330] -11.346 (*)  & [370] -2.858 (*)  & [370] -6.734 (*)  & [370] -6.014 (*)  \\
   norm. avg.  & $\stnu$: 0.985 & $\stnu$: 0.985 & $\stnu$: 0.984 & $\react$: 1.0 & $\react$: 0.999 & $\saashort$: 0.999 \\
   norm. avg.  & $\react$: 1.015 & $\saashort$: 1.015 & $\pro_{0.9}$: 1.016 & $\saashort$: 1.0 & $\pro_{0.9}$: 1.001 & $\pro_{0.9}$: 1.001 \\ \hline
    ubo100 & \stnu-$\react$ & \stnu-$\saashort$ & \stnu-$\pro_{0.9}$ & $\react$-$\saashort$ & $\react$-$\pro_{0.9}$ & $\saashort$-$\pro_{0.9}$ \\
    $[n] \ t \ (p)$ & [358] -6.316 (*)  & [370] -6.813 (*)  & [370] -6.817 (*)  & [388] -2.833 (*)  & [388] -7.738 (*)  & [400] -0.514 (0.608) \\
   norm. avg.  & $\stnu$: 0.991 & $\stnu$: 0.99 & $\stnu$: 0.99 & $\react$: 0.999 & $\react$: 0.999 & $\saashort$: 1.0 \\
    norm. avg. & $\react$: 1.009 & $\saashort$: 1.01 & $\pro_{0.9}$: 1.01 & $\saashort$: 1.001 & $\pro_{0.9}$: 1.001 & $\pro_{0.9}$: 1.0 \\
    \hline
    \end{tabular}}
    \caption{Magnitude test on solution quality for noise factor $\epsilon=1$. Using a pairwise t-test, including all instances for which both methods found a feasible solution. Each cell  shows on the first row [nr pairs] t-stat (p-value) with (*) for p $<$ 0.05 and on the second row the normalized average of method 1 and on the third row the normalized average of method 2.}
    \label{tab:obj_magnitude_1}
\end{table*}

\begin{table*}[htb!]
    \centering
     \scalebox{0.75}{
    \begin{tabular}{l|llllll} \hline
    j10 & $\stnu$-$\react$ & $\stnu$-$\saashort$ & $\stnu$-$\pro_{0.9}$ & $\react$-$\saashort$ & $\react$-$\pro_{0.9}$ & $\saashort$-$\pro_{0.9}$ \\
    $[n] \ t \ (p)$ & [205] -9.128 (*)  & [204] -16.872 (*)  & [206] -17.395 (*)  & [230] -10.858 (*)  & [233] -11.811 (*)  & [231] -4.856 (*)  \\
    norm. avg. & $\stnu$: 0.963 & $\stnu$: 0.929 & $\stnu$: 0.925 & $\react$: 0.966 & $\react$: 0.964 & $\saashort$: 0.997 \\
    norm. avg. & $\react$: 1.037 & $\saashort$: 1.071 & $\pro_{0.9}$: 1.075 & $\saashort$: 1.034 & $\pro_{0.9}$: 1.036 & $\pro_{0.9}$: 1.003 \\ \hline 
    j20 & $\stnu$-$\react$ & $\stnu$-$\saashort$ & $\stnu$-$\pro_{0.9}$ & $\react$-$\saashort$ & $\react$-$\pro_{0.9}$ & $\saashort$-$\pro_{0.9}$ \\
    $[n] \ t \ (p)$ & [170] -10.855 (*)  & [172] -11.237 (*)  & [170] -11.89 (*)  & [174] -4.95 (*)  & [176] -6.22 (*)  & [173] -3.656 (*)  \\
   norm. avg.  & $\stnu$: 0.972 & $\stnu$: 0.968 & $\stnu$: 0.964 & $\react$: 0.991 & $\react$: 0.989 & $\saashort$: 0.997 \\
     norm. avg. & $\react$: 1.028 & $\saashort$: 1.032 & $\pro_{0.9}$: 1.036 & $\saashort$: 1.009 & $\pro_{0.9}$: 1.011 & $\pro_{0.9}$: 1.003 \\ \hline
    j30 & $\stnu$-$\react$ & $\stnu$-$\saashort$ & $\stnu$-$\pro_{0.9}$ & $\react$-$\saashort$ & $\react$-$\pro_{0.9}$ & $\saashort$-$\pro_{0.9}$ \\
    $[n] \ t \ (p)$ & [140] -4.151 (*)  & [148] -6.017 (*)  & [138] -6.149 (*)  & [152] -4.199 (*)  & [160] -6.746 (*)  & [150] -5.404 (*)  \\
    norm. avg. & $\stnu$: 0.99 & $\stnu$: 0.986 & $\stnu$: 0.985 & $\react$: 0.994 & $\react$: 0.991 & $\saashort$: 0.996 \\
   norm. avg.  & $\react$: 1.01 & $\saashort$: 1.014 & $\pro_{0.9}$: 1.015 & $\saashort$: 1.006 & $\pro_{0.9}$: 1.009 & $\pro_{0.9}$: 1.004 \\ \hline
    ubo50 & $\stnu$-$\react$ & $\stnu$-$\saashort$ & $\stnu$-$\pro_{0.9}$ & $\react$-$\saashort$ & $\react$-$\pro_{0.9}$ & $\saashort$-$\pro_{0.9}$ \\
    $[n] \ t \ (p)$ & [141] -7.974 (*)  & [146] -8.139 (*)  & [148] -8.125 (*)  & [145] -3.581 (*)  & [155] -4.646 (*)  & [150] -2.823 (*)  \\
    norm. avg. & $\stnu$: 0.984 & $\stnu$: 0.981 & $\stnu$: 0.979 & $\react$: 0.998 & $\react$: 0.997 & $\saashort$: 0.999 \\
    norm. avg. & $\react$: 1.016 & $\saashort$: 1.019 & $\pro_{0.9}$: 1.021 & $\saashort$: 1.002 & $\pro_{0.9}$: 1.003 & $\pro_{0.9}$: 1.001 \\ \hline
    ubo100 & $\stnu$-$\react$ & $\stnu$-$\saashort$ & $\stnu$-$\pro_{0.9}$ & $\react$-$\saashort$ & $\react$-$\pro_{0.9}$ & $\saashort$-$\pro_{0.9}$ \\
    $[n] \ t \ (p)$ & [94] -0.305 (0.761) & [114] -0.296 (0.767) & [114] -0.93 (0.355) & [76] -1.573 (0.12) & [99] -4.187 (*)  & [94] 0.59 (0.556) \\
    norm. avg. & $\stnu$: 0.999 & $\stnu$: 0.999 & $\stnu$: 0.998 & $\react$: 0.999 & $\react$: 0.999 & $\saashort$: 1.0 \\
    norm. avg. & $\react$: 1.001 & $\saashort$: 1.001 & $\pro_{0.9}$: 1.002 & $\saashort$: 1.001 & $\pro_{0.9}$: 1.001 & $\pro_{0.9}$: 1.0 \\
    \hline
    \end{tabular}}
    \caption{Magnitude test on solution quality for noise factor $\epsilon=2$. Using a pairwise t-test, including all instances for which both methods found a feasible solution. Each cell shows on the first row [nr pairs] t-stat (p-value) with (*) for p $<$ 0.05 and on the second row the normalized average of method 1 and on the third row the normalized average of method 2.}
    \label{tab:obj_magnitude_2}
\end{table*}

\newpage
\subsubsection{Magnitude Tests Time Offline}
 For time offline, $\epsilon=2$, ubo50 and ubo100, the Wilcoxon test found that the $\stnu$ is better than e.g. $\proactive_{0.9}$ and $\reactive$ while looking at the magnitude results on double hits (this is an important difference in test procedure!), we find that the $\proactive_{0.9}$ and $\reactive$ are much faster offline than $\stnu$, which is also expected. Then, we observe no difference between $\proactive_{0.9}$ and $\reactive$, which can be explained by the fact that both have the same offline procedure, and we select only double hits for the test.
\begin{table*}[htb!]
    \centering
     \scalebox{0.75}{
    \begin{tabular}{l|llllll} \hline
    j10 & $\pro_{0.9}$-$\react$ & $\react$-$\stnu$ & $\react$-$\saashort$ & $\pro_{0.9}$-$\stnu$ & $\pro_{0.9}$-$\saashort$ & $\saashort$-$\stnu$ \\
    $[n] \ t \ (p)$ & [340] nan (nan) & [260] -365.854 (*)  & [340] -77.22 (*)  & [260] -365.854 (*)  & [340] -77.22 (*)  & [260] -19.399 (*)  \\
    norm. avg. & $\pro_{0.9}$: 1.0 & $\react$: 0.14 & $\react$: 0.32 & $\pro_{0.9}$: 0.14 & $\pro_{0.9}$: 0.32 & $\saashort$: 0.65 \\
    norm. avg. & $\react$: 1.0 & $\stnu$: 1.86 & $\saashort$: 1.68 & $\stnu$: 1.86 & $\saashort$: 1.68 & $\stnu$: 1.35 \\ \hline
    j20 & $\pro_{0.9}$-$\react$ & $\react$-$\stnu$ & $\react$-$\saashort$ & $\pro_{0.9}$-$\stnu$ & $\pro_{0.9}$-$\saashort$ & $\saashort$-$\stnu$ \\
   $[n] \ t \ (p)$  & [270] nan (nan) & [230] -45.769 (*)  & [270] -52.707 (*)  & [230] -45.769 (*)  & [270] -52.707 (*)  & [230] -3.062 (*)  \\
    norm. avg. & $\pro_{0.9}$: 1.0 & $\react$: 0.22 & $\react$: 0.28 & $\pro_{0.9}$: 0.22 & $\pro_{0.9}$: 0.28 & $\saashort$: 0.91 \\
    norm. avg. & $\react$: 1.0 & $\stnu$: 1.78 & $\saashort$: 1.72 & $\stnu$: 1.78 & $\saashort$: 1.72 & $\stnu$: 1.09 \\ \hline
    j30 & $\pro_{0.9}$-$\react$ & $\react$-$\stnu$ & $\react$-$\saashort$ & $\pro_{0.9}$-$\stnu$ & $\pro_{0.9}$-$\saashort$ & \stnu-$\saashort$ \\
   $[n] \ t \ (p)$  & [320] nan (nan) & [280] -38.703 (*)  & [320] -53.836 (*)  & [280] -38.703 (*)  & [320] -53.836 (*)  & [280] 0.5 (0.618) \\
    norm. avg. & $\pro_{0.9}$: 1.0 & $\react$: 0.3 & $\react$: 0.32 & $\pro_{0.9}$: 0.3 & $\pro_{0.9}$: 0.32 & $\stnu$: 1.02 \\
   norm. avg.  & $\react$: 1.0 & $\stnu$: 1.7 & $\saashort$: 1.68 & $\stnu$: 1.7 & $\saashort$: 1.68 & $\saashort$: 0.98 \\ \hline
    ubo50 & $\pro_{0.9}$-$\react$ & $\react$-$\stnu$ & $\react$-$\saashort$ & $\pro_{0.9}$-$\stnu$ & $\pro_{0.9}$-$\saashort$ & \stnu-$\saashort$ \\
    $[n] \ t \ (p)$ & [370] nan (nan) & [330] -36.15 (*)  & [370] -72.563 (*)  & [330] -36.15 (*)  & [370] -72.563 (*)  & [330] -5.636 (*)  \\
   norm. avg.  & $\pro_{0.9}$: 1.0 & $\react$: 0.36 & $\react$: 0.24 & $\pro_{0.9}$: 0.36 & $\pro_{0.9}$: 0.24 & $\stnu$: 0.82 \\
   norm. avg.  & $\react$: 1.0 & $\stnu$: 1.64 & $\saashort$: 1.76 & $\stnu$: 1.64 & $\saashort$: 1.76 & $\saashort$: 1.18 \\ \hline
    ubo100 & $\pro_{0.9}$-$\react$ & $\react$-$\stnu$ & $\react$-$\saashort$ & $\pro_{0.9}$-$\stnu$ & $\pro_{0.9}$-$\saashort$ & \stnu-$\saashort$ \\
   $[n] \ t \ (p)$  & [388] nan (nan) & [358] -27.209 (*)  & [388] -70.349 (*)  & [370] -25.293 (*)  & [400] -70.926 (*)  & [370] -6.391 (*)  \\
   norm. avg.  & $\pro_{0.9}$: 1.0 & $\react$: 0.43 & $\react$: 0.27 & $\pro_{0.9}$: 0.45 & $\pro_{0.9}$: 0.27 & $\stnu$: 0.79 \\
   norm. avg.  & $\react$: 1.0 & $\stnu$: 1.57 & $\saashort$: 1.73 & $\stnu$: 1.55 & $\saashort$: 1.73 & $\saashort$: 1.21 \\
    \hline
    \end{tabular}}
    \caption{Magnitude test on time offline for noise factor $\epsilon=1$. Using a pairwise t-test, including all instances for which both methods found a feasible solution. Each cell  shows on the first row [nr pairs] t-stat (p-value) with (*) for p $<$ 0.05 and on the second row the normalized average of method 1 and on the third row the normalized average of method 2.}
    \label{tab:offline_magnitude_1}
\end{table*}

\begin{table*}[htb!]
    \centering
     \scalebox{0.75}{
    \begin{tabular}{l|llllll} \hline
    j10 & $\pro_{0.9}$-$\react$ & $\react$-$\stnu$ & $\react$-$\saashort$ & $\pro_{0.9}$-$\stnu$ & $\pro_{0.9}$-$\saashort$ & $\saashort$-$\stnu$ \\
     $[n] \ t \ (p)$ & [233] nan (nan) & [205] -229.689 (*)  & [230] -59.39 (*)  & [206] -230.812 (*)  & [231] -59.404 (*)  & [204] -17.066 (*)  \\
    norm. avg. & $\pro_{0.9}$: 1.0 & $\react$: 0.15 & $\react$: 0.35 & $\pro_{0.9}$: 0.15 & $\pro_{0.9}$: 0.35 & $\saashort$: 0.61 \\
    norm. avg. & $\react$: 1.0 & $\stnu$: 1.85 & $\saashort$: 1.65 & $\stnu$: 1.85 & $\saashort$: 1.65 & $\stnu$: 1.39 \\ \hline
    j20 & $\pro_{0.9}$-$\react$ & $\react$-$\stnu$ & $\react$-$\saashort$ & $\pro_{0.9}$-$\stnu$ & $\pro_{0.9}$-$\saashort$ & \stnu-$\saashort$ \\
    $[n] \ t \ (p)$ & [176] nan (nan) & [170] -82.607 (*)  & [174] -63.732 (*)  & [170] -82.607 (*)  & [173] -63.382 (*)  & [172] 0.379 (0.705) \\
   norm. avg.  & $\pro_{0.9}$: 1.0 & $\react$: 0.21 & $\react$: 0.23 & $\pro_{0.9}$: 0.21 & $\pro_{0.9}$: 0.23 & $\stnu$: 1.01 \\
    norm. avg. & $\react$: 1.0 & $\stnu$: 1.79 & $\saashort$: 1.77 & $\stnu$: 1.79 & $\saashort$: 1.77 & $\saashort$: 0.99 \\ \hline
    j30 & $\pro_{0.9}$-$\react$ & $\react$-$\stnu$ & $\react$-$\saashort$ & $\pro_{0.9}$-$\stnu$ & $\pro_{0.9}$-$\saashort$ & \stnu-$\saashort$ \\
    $[n] \ t \ (p)$ & [160] nan (nan) & [140] -33.765 (*)  & [152] -40.336 (*)  & [138] -33.253 (*)  & [150] -39.904 (*)  & [148] -2.168 (*)  \\
   norm. avg.  & $\pro_{0.9}$: 1.0 & $\react$: 0.34 & $\react$: 0.33 & $\pro_{0.9}$: 0.34 & $\pro_{0.9}$: 0.33 & $\stnu$: 0.9 \\
   norm. avg.  & $\react$: 1.0 & $\stnu$: 1.66 & $\saashort$: 1.67 & $\stnu$: 1.66 & $\saashort$: 1.67 & $\saashort$: 1.1 \\ \hline
    ubo50 & \stnu-$\pro_{0.9}$ & \stnu-$\react$ & \stnu-$\saashort$ & $\pro_{0.9}$-$\react$ & $\pro_{0.9}$-$\saashort$ & $\react$-$\saashort$ \\
    $[n] \ t \ (p)$ & [148] 26.653 (*)  & [141] 27.168 (*)  & [146] -0.952 (0.343) & [155] nan (nan) & [150] -45.457 (*)  & [145] -44.29 (*)  \\
   norm. avg.  & $\stnu$: 1.7 & $\stnu$: 1.72 & $\stnu$: 0.95 & $\pro_{0.9}$: 1.0 & $\pro_{0.9}$: 0.24 & $\react$: 0.24 \\
    norm. avg. & $\pro_{0.9}$: 0.3 & $\react$: 0.28 & $\saashort$: 1.05 & $\react$: 1.0 & $\saashort$: 1.76 & $\saashort$: 1.76 \\ \hline
    ubo100 & \stnu-$\pro_{0.9}$ & \stnu-$\react$ & \stnu-$\saashort$ & $\pro_{0.9}$-$\react$ & $\pro_{0.9}$-$\saashort$ & $\react$-$\saashort$ \\
    $[n] \ t \ (p)$ & [114] 14.765 (*)  & [94] 15.183 (*)  & [114] -3.125 (*)  & [99] nan (nan) & [94] -42.151 (*)  & [76] -42.944 (*)  \\
    norm. avg. & $\stnu$: 1.55 & $\stnu$: 1.61 & $\stnu$: 0.83 & $\pro_{0.9}$: 1.0 & $\pro_{0.9}$: 0.21 & $\react$: 0.18 \\
    norm. avg. & $\pro_{0.9}$: 0.45 & $\react$: 0.39 & $\saashort$: 1.17 & $\react$: 1.0 & $\saashort$: 1.79 & $\saashort$: 1.82 \\
    \hline
    \end{tabular}}
    \caption{Magnitude test on time offline for noise factor $\epsilon=2$. Using a pairwise t-test, including all instances for which both methods found a feasible solution. Each cell  shows on the first row [nr pairs] t-stat (p-value) with (*) for p $<$ 0.05 and on the second row the normalized average of method 1 and on the third row the normalized average of method 2.}
    \label{tab:offline_magnitude_2}
\end{table*}

\newpage
\subsubsection{Magnitude Tests Time Online}
Looking at the magnitude test of time online with $\epsilon=2$, according to Wilcoxon, $\stnu$ is outperforming $\saa$ and $\proactive_{0.9}$ for ubo50 and ubo100. However, this is again mainly caused by the higher ratio of feasible solutions, as when we analyze the magnitude difference the proactive methods are much faster online looking at the double hits (which is logical as this comprises only of feasibility checking). 
\begin{table*}[htb!]
    \centering
     \scalebox{0.75}{
    \begin{tabular}{l|llllll} \hline
    j10 & $\pro_{0.9}$-$\saashort$ & $\pro_{0.9}$-$\stnu$ & $\saashort$-$\stnu$ & $\pro_{0.9}$-$\react$ & $\saashort$-$\react$ & \stnu-$\react$ \\
    $[n] \ t \ (p)$  & [340] 0.679 (0.498) & [260] -109.014 (*)  & [260] -106.917 (*)  & [340] -4393.475 (*)  & [340] -4277.995 (*)  & [260] -1151.565 (*)  \\
   norm. avg.  & $\pro_{0.9}$: 1.02 & $\pro_{0.9}$: 0.05 & $\saashort$: 0.05 & $\pro_{0.9}$: 0.0 & $\saashort$: 0.0 & $\stnu$: 0.04 \\
    norm. avg. & $\saashort$: 0.98 & $\stnu$: 1.95 & $\stnu$: 1.95 & $\react$: 2.0 & $\react$: 2.0 & $\react$: 1.96 \\ \hline
    j20 & $\pro_{0.9}$-$\saashort$ & $\pro_{0.9}$-$\stnu$ & $\saashort$-$\stnu$ & $\pro_{0.9}$-$\react$ & $\saashort$-$\react$ & \stnu-$\react$ \\
    $[n] \ t \ (p)$ & [270] -0.663 (0.508) & [230] -285.563 (*)  & [230] -282.464 (*)  & [270] -7311.952 (*)  & [270] -7470.984 (*)  & [230] -605.38 (*)  \\
   norm. avg.  & $\pro_{0.9}$: 0.97 & $\pro_{0.9}$: 0.03 & $\saashort$: 0.03 & $\pro_{0.9}$: 0.0 & $\saashort$: 0.0 & $\stnu$: 0.07 \\
   norm. avg.  & $\saashort$: 1.03 & $\stnu$: 1.97 & $\stnu$: 1.97 & $\react$: 2.0 & $\react$: 2.0 & $\react$: 1.93 \\ \hline
    j30 & $\pro_{0.9}$-$\saashort$ & $\pro_{0.9}$-$\stnu$ & $\saashort$-$\stnu$ & $\pro_{0.9}$-$\react$ & $\saashort$-$\react$ & \stnu-$\react$ \\
    $[n] \ t \ (p)$ & [320] -0.156 (0.876) & [280] -789.246 (*)  & [280] -786.726 (*)  & [320] -14638.229 (*)  & [320] -14425.009 (*)  & [280] -338.664 (*)  \\
   norm. avg.  & $\pro_{0.9}$: 0.99 & $\pro_{0.9}$: 0.02 & $\saashort$: 0.02 & $\pro_{0.9}$: 0.0 & $\saashort$: 0.0 & $\stnu$: 0.09 \\
   norm. avg.  & $\saashort$: 1.01 & $\stnu$: 1.98 & $\stnu$: 1.98 & $\react$: 2.0 & $\react$: 2.0 & $\react$: 1.91 \\ \hline
    ubo50 & $\saashort$-$\pro_{0.9}$ & $\pro_{0.9}$-$\stnu$ & $\saashort$-$\stnu$ & $\pro_{0.9}$-$\react$ & $\saashort$-$\react$ & \stnu-$\react$ \\
   $[n] \ t \ (p)$  & [370] -0.795 (0.427) & [330] -3951.584 (*)  & [330] -3976.441 (*)  & [370] -30723.506 (*)  & [370] -31447.766 (*)  & [330] -142.758 (*)  \\
    norm. avg. & $\saashort$: 0.97 & $\pro_{0.9}$: 0.01 & $\saashort$: 0.01 & $\pro_{0.9}$: 0.0 & $\saashort$: 0.0 & $\stnu$: 0.15 \\
    norm. avg. & $\pro_{0.9}$: 1.03 & $\stnu$: 1.99 & $\stnu$: 1.99 & $\react$: 2.0 & $\react$: 2.0 & $\react$: 1.85 \\ \hline
    ubo100 & $\saashort$-$\pro_{0.9}$ & $\pro_{0.9}$-$\stnu$ & $\saashort$-$\stnu$ & $\pro_{0.9}$-$\react$ & $\saashort$-$\react$ & \stnu-$\react$ \\
    $[n] \ t \ (p)$ & [400] -0.065 (0.948) & [370] -56517.571 (*)  & [370] -48810.689 (*)  & [388] -83435.86 (*)  & [388] -78013.468 (*)  & [358] -60.202 (*)  \\
    norm. avg. & $\saashort$: 1.0 & $\pro_{0.9}$: 0.0 & $\saashort$: 0.0 & $\pro_{0.9}$: 0.0 & $\saashort$: 0.0 & $\stnu$: 0.23 \\
    norm. avg. & $\pro_{0.9}$: 1.0 & $\stnu$: 2.0 & $\stnu$: 2.0 & $\react$: 2.0 & $\react$: 2.0 & $\react$: 1.77 \\
    \hline
    \end{tabular}}
    \caption{Magnitude t-test on time online for noise factor $\epsilon=1$ (double hits). Each cell  shows on the first row [nr pairs] t-stat (p-value) with (*) for p $<$ 0.05 and on the second row the normalized average of method 1 and on the third row the normalized average of method 2.}
    \label{tab:online_magnitude_1}
\end{table*}

\begin{table*}[htb!]
    \centering
     \scalebox{0.75}{
    \begin{tabular}{l|llllll} \hline
    j10 & $\pro_{0.9}$-$\saashort$ & $\pro_{0.9}$-$\stnu$ & $\saashort$-$\stnu$ & $\pro_{0.9}$-$\react$ & $\saashort$-$\react$ & \stnu-$\react$ \\
    $[n] \ t \ (p)$ & [231] 0.391 (0.697) & [206] -92.546 (*)  & [204] -91.532 (*)  & [233] -4426.055 (*)  & [230] -4653.225 (*)  & [205] -1399.699 (*)  \\
    norm. avg. & $\pro_{0.9}$: 1.01 & $\pro_{0.9}$: 0.06 & $\saashort$: 0.06 & $\pro_{0.9}$: 0.0 & $\saashort$: 0.0 & $\stnu$: 0.03 \\
    norm. avg. & $\saashort$: 0.99 & $\stnu$: 1.94 & $\stnu$: 1.94 & $\react$: 2.0 & $\react$: 2.0 & $\react$: 1.97 \\ \hline
    j20 & $\pro_{0.9}$-$\saashort$ & $\pro_{0.9}$-$\stnu$ & $\saashort$-$\stnu$ & $\pro_{0.9}$-$\react$ & $\saashort$-$\react$ & \stnu-$\react$ \\
    $[n] \ t \ (p)$ & [173] 0.77 (0.443) & [170] -241.897 (*)  & [172] -262.612 (*)  & [176] -7787.819 (*)  & [174] -8817.279 (*)  & [170] -935.848 (*)  \\
    norm. avg. & $\pro_{0.9}$: 1.03 & $\pro_{0.9}$: 0.03 & $\saashort$: 0.03 & $\pro_{0.9}$: 0.0 & $\saashort$: 0.0 & $\stnu$: 0.06 \\
    norm. avg. & $\saashort$: 0.97 & $\stnu$: 1.97 & $\stnu$: 1.97 & $\react$: 2.0 & $\react$: 2.0 & $\react$: 1.94 \\ \hline
    j30 & $\saashort$-$\pro_{0.9}$ & $\pro_{0.9}$-$\stnu$ & $\saashort$-$\stnu$ & $\pro_{0.9}$-$\react$ & $\saashort$-$\react$ & \stnu-$\react$ \\
    $[n] \ t \ (p)$ & [150] -0.264 (0.792) & [138] -514.302 (*)  & [148] -547.413 (*)  & [160] -14437.976 (*)  & [152] -14871.238 (*)  & [140] -397.488 (*)  \\
    norm. avg. & $\saashort$: 0.99 & $\pro_{0.9}$: 0.02 & $\saashort$: 0.02 & $\pro_{0.9}$: 0.0 & $\saashort$: 0.0 & $\stnu$: 0.07 \\
   norm. avg.  & $\pro_{0.9}$: 1.01 & $\stnu$: 1.98 & $\stnu$: 1.98 & $\react$: 2.0 & $\react$: 2.0 & $\react$: 1.93 \\ \hline
    ubo50 & $\pro_{0.9}$-$\saashort$ & \stnu-$\pro_{0.9}$ & \stnu-$\saashort$ & $\pro_{0.9}$-$\react$ & $\saashort$-$\react$ & \stnu-$\react$ \\
    $[n] \ t \ (p)$ & [150] -0.707 (0.481) & [148] 2600.902 (*)  & [146] 2585.661 (*)  & [155] -25795.773 (*)  & [145] -23210.69 (*)  & [141] -121.392 (*)  \\
   norm. avg.  & $\pro_{0.9}$: 0.96 & $\stnu$: 1.99 & $\stnu$: 1.99 & $\pro_{0.9}$: 0.0 & $\saashort$: 0.0 & $\stnu$: 0.14 \\
   norm. avg.  & $\saashort$: 1.04 & $\pro_{0.9}$: 0.01 & $\saashort$: 0.01 & $\react$: 2.0 & $\react$: 2.0 & $\react$: 1.86 \\ \hline
    ubo100 & $\saashort$-$\pro_{0.9}$ & stnu-$\pro_{0.9}$ & \stnu-$\saashort$ & $\pro_{0.9}$-$\react$ & $\saashort$-$\react$ & \stnu-$\react$ \\
   $[n] \ t \ (p)$  & [94] 0.141 (0.888) & [114] 21743.706 (*)  & [114] 21911.639 (*)  & [99] -46429.42 (*)  & [76] -40238.643 (*)  & [94] -41.405 (*)  \\
   norm. avg.  & $\saashort$: 1.0 & $\stnu$: 2.0 & $\stnu$: 2.0 & $\pro_{0.9}$: 0.0 & $\saashort$: 0.0 & $\stnu$: 0.21 \\
   norm. avg.  & $\pro_{0.9}$: 1.0 & $\pro_{0.9}$: 0.0 & $\saashort$: 0.0 & $\react$: 2.0 & $\react$: 2.0 & $\react$: 1.79 \\
    \hline
    \end{tabular}}
    \caption{Magnitude t-test on time online for noise factor $\epsilon=2$ (double hits). Each cell  shows on the first row [nr pairs] t-stat (p-value) with (*) for p $<$ 0.05 and on the second row the normalized average of method 1 and on the third row the normalized average of method 2.}
    \label{tab:online_magnitude_2}
\end{table*}
\subsection{Summary}
In our main paper, we employed the statistical analysis from this document to provide a summarizing partial ordering per metric including a brief summary of the main findings. We decide to present the most occurring patterns and in the text, highlight any important exceptions.

\newpage
\section{Results Tables Partial Orderings}
\begin{table*}[htb!]
    \centering
     \scalebox{0.75}{
    \begin{tabular}{l|llllll} \hline
    j10 & $\stnu$-$\react$ & $\stnu$-$\saashort$ & $\stnu$-$\pro_{0.9}$ & $\react$-$\saashort$ & $\react$-$\pro_{0.9}$ & $\saashort$-$\pro_{0.9}$ \\
    Wilcoxon: [n] z (p) & [340] -1.659 (0.097) & [340] -0.067 (0.947) & [340] -0.082 (0.935) & [340] -10.571 (*) & [340] -10.918 (*) & [340] -5.756 (*) \\
proportion: [n] z (p)  & [308] 0.591 (*) & [310] 0.632 (*) & [310] 0.632 (*) & [117] 0.991 (*) & [121] 1.0 (*) & [37] 0.973 (*) \\ \hline
    j20 & $\stnu$-$\react$ & $\stnu$-$\saashort$ & $\stnu$-$\pro_{0.9}$ & $\react$-$\saashort$ & $\react$-$\pro_{0.9}$ & $\saashort$-$\pro_{0.9}$ \\
      Wilcoxon: [n] z (p) & [270] -1.523 (0.128) & [270] -2.401 (*) & [270] -2.679 (*) & [270] -5.936 (*) & [270] -7.852 (*) & [270] -6.144 (*) \\
    proportion: [n] z (p)  & [213] 0.596 (*) & [215] 0.633 (*) & [213] 0.643 (*) & [75] 0.827 (*) & [62] 1.0 (*) & [59] 0.898 (*) \\ \hline
    j30 & \stnu-$\react$ & \stnu-$\saashort$ & \stnu-$\pro_{0.9}$ & $\react$-$\saashort$ & $\react$-$\pro_{0.9}$ & $\saashort$-$\pro_{0.9}$ \\
     Wilcoxon: [n] z (p)  & [320] -0.892 (0.372) & [320] -1.218 (0.223) & [320] -2.157 (*) & [320] -5.54 (*) & [320] -9.395 (*) & [320] -5.299 (*) \\
    proportion: [n] z (p)  & [214] 0.579 (*) & [214] 0.575 (*) & [217] 0.618 (*) & [103] 0.757 (*) & [89] 1.0 (*) & [79] 0.797 (*) \\ \hline
    ubo50 & \stnu-$\react$ & \stnu-$\saashort$ & \stnu-$\pro_{0.9}$ & $\react$-$\saashort$ & $\react$-$\pro_{0.9}$ & $\saashort$-$\pro_{0.9}$ \\
      Wilcoxon: [n] z (p) & [370] -6.748 (*) & [370] -6.8 (*) & [370] -6.859 (*) & [370] -2.76 (*) & [370] -8.531 (*) & [370] -7.085 (*) \\
    proportion: [n] z (p)  & [277] 0.776 (*) & [276] 0.779 (*) & [278] 0.781 (*) & [61] 0.672 (*) & [73] 1.0 (*) & [65] 0.938 (*) \\ \hline
    ubo100 & \stnu-$\react$ & \stnu-$\saashort$ & \stnu-$\pro_{0.9}$ & $\react$-$\saashort$ & $\react$-$\pro_{0.9}$ & $\saashort$-$\pro_{0.9}$ \\
     Wilcoxon: [n] z (p)  & [400] -2.12 (*) & [400] -1.76 (0.078) & [400] -1.842 (0.065) & [400] -0.442 (0.659) & [400] -6.534 (*) & [400] -0.2 (0.842) \\
    proportion: [n] z (p)  & [312] 0.519 (0.533) & [314] 0.51 (0.778) & [311] 0.514 (0.65) & [147] 0.51 (0.869) & [88] 0.864 (*) & [151] 0.503 (1.0) \\
    \hline
    \end{tabular}}
    \caption{Pairwise comparison on schedule quality (makespan) for noise factor $\epsilon=1$. Using a Wilcoxon test and a proportion test. Including all instances for which at least one of the two methods found a feasible solution. Note that the ordering matters: the first method showed is the better of the two in each pair method 1 - method 2. Each cell shows on the first row [nr pairs] the z-value (p-value) of the Wilcoxon test with (*) for p  $<$ 0.05. Each cell shows on the second row [nr pairs] z-value (p-value) with (*) for p $<$ 0.05 for the proportion test.}
    \label{tab:obj_pairwise_1}
\end{table*}

\begin{table*}[htb!]
    \centering
     \scalebox{0.75}{
    \begin{tabular}{l|llllll} \hline
    j10 & $\stnu$-$\react$ & $\stnu$-$\saashort$ & $\stnu$-$\pro_{0.9}$ & $\react$-$\saashort$ & $\react$-$\pro_{0.9}$ & $\saashort$-$\pro_{0.9}$ \\
    Wilcoxon: [n] z (p) & [258] -5.194 (*) & [258] -7.825 (*) & [258] -7.822 (*) & [235] -10.082 (*) & [234] -11.046 (*) & [235] -4.058 (*) \\
   proportion: [n] z (p)  & [237] 0.696 (*) & [247] 0.802 (*) & [248] 0.802 (*) & [146] 0.911 (*) & [134] 0.993 (*) & [37] 0.838 (*) \\  \hline
    j20 & $\stnu$-$\react$ & $\stnu$-$\saashort$ & $\stnu$-$\pro_{0.9}$ & $\react$-$\saashort$ & $\react$-$\pro_{0.9}$ & $\saashort$-$\pro_{0.9}$ \\
     Wilcoxon: [n] z (p) & [217] -9.825 (*) & [217] -10.194 (*) & [216] -10.57 (*) & [182] -4.58 (*) & [177] -7.243 (*) & [182] -3.241 (*) \\
     proportion: [n] z (p) & [185] 0.854 (*) & [188] 0.872 (*) & [188] 0.888 (*) & [79] 0.759 (*) & [53] 1.0 (*) & [60] 0.7 (*) \\  \hline
    j30 & $\stnu$-$\react$ & $\stnu$-$\saashort$ & $\stnu$-$\pro_{0.9}$ & $\react$-$\saashort$ & $\react$-$\pro_{0.9}$ & $\saashort$-$\pro_{0.9}$ \\
     Wilcoxon: [n] z (p) & [233] -6.986 (*) & [234] -7.175 (*) & [232] -7.993 (*) & [183] -1.355 (0.175) & [163] -7.557 (*) & [182] -5.689 (*) \\
    proportion: [n] z (p) & [191] 0.749 (*) & [192] 0.776 (*) & [196] 0.796 (*) & [94] 0.574 (0.18) & [58] 1.0 (*) & [96] 0.802 (*) \\  \hline
    ubo50 & $\stnu$-$\react$ & $\stnu$-$\saashort$ & $\stnu$-$\pro_{0.9}$ & $\react$-$\saashort$ & $\react$-$\pro_{0.9}$ & $\saashort$-$\pro_{0.9}$ \\
   Wilcoxon: [n] z (p)   & [284] -11.76 (*) & [285] -11.73 (*) & [284] -11.799 (*) & [171] -1.973 (*) & [162] -3.74 (*) & [173] -0.938 (0.348) \\
    proportion: [n] z (p) & [236] 0.907 (*) & [238] 0.908 (*) & [236] 0.911 (*) & [50] 0.66 (*) & [39] 0.821 (*) & [36] 0.583 (0.405) \\  \hline
    ubo100 & $\stnu$-$\react$ & $\stnu$-$\saashort$ & $\stnu$-$\pro_{0.9}$ & $\react$-$\saashort$ & $\react$-$\pro_{0.9}$ & $\saashort$-$\pro_{0.9}$ \\
   Wilcoxon: [n] z (p)  & [285] -12.139 (*) & [284] -11.55 (*) & [288] -11.26 (*) & [141] -1.83 (0.067) & [122] -1.017 (0.309) & [146] -0.083 (0.934) \\
    proportion: [n] z (p) & [261] 0.851 (*) & [254] 0.799 (*) & [265] 0.796 (*) & [82] 0.415 (0.151) & [44] 0.477 (0.88) & [82] 0.5 (0.912) \\
    \hline
    \end{tabular}}
    \caption{Pairwise comparison on schedule quality (makespan) for noise factor $\epsilon=2$. Using a Wilcoxon test and a proportion test. Including all instances for which at least one of the two methods found a feasible solution. Note that the ordering matters: the first method showed is the better of the two in each pair method 1 - method 2. Each cell shows on the first row [nr pairs] the z-value (p-value) of the Wilcoxon test with (*) for p  $<$ 0.05. Each cell shows on the second row [nr pairs] z-value (p-value) with (*) for p $<$ 0.05 for the proportion test. }
    \label{tab:obj_pairwise_2}
\end{table*}

\begin{table*}[htb!]
    \centering
     \scalebox{0.75}{
    \begin{tabular}{l|llllll} \hline
    j10 & $\pro_{0.9}$-$\react$ & $\react$-$\stnu$ & $\react$-$\saashort$ & $\pro_{0.9}$-$\stnu$ & $\pro_{0.9}$-$\saashort$ & $\saashort$-$\stnu$ \\
   Wilcoxon: [n] z (p)  & [340] nan (nan) & [340] -15.982 (*) & [340] -15.982 (*) & [340] -15.982 (*) & [340] -15.982 (*) & [340] -13.986 (*) \\
    proportion: [n] z (p) & [nan] nan (nan) & [340] 1.0 (*) & [340] 1.0 (*) & [340] 1.0 (*) & [340] 1.0 (*) & [340] 0.882 (*) \\  \hline
    j20 & $\pro_{0.9}$-$\react$ & $\react$-$\stnu$ & $\react$-$\saashort$ & $\pro_{0.9}$-$\stnu$ & $\pro_{0.9}$-$\saashort$ & $\saashort$-$\stnu$ \\
      Wilcoxon: [n] z (p) & [270] nan (nan) & [270] -14.125 (*) & [270] -14.245 (*) & [270] -14.125 (*) & [270] -14.245 (*) & [270] -3.927 (*) \\
    proportion: [n] z (p) & [nan] nan (nan) & [270] 0.963 (*) & [270] 1.0 (*) & [270] 0.963 (*) & [270] 1.0 (*) & [270] 0.63 (*) \\  \hline
    j30 & $\pro_{0.9}$-$\react$ & $\react$-$\stnu$ & $\react$-$\saashort$ & $\pro_{0.9}$-$\stnu$ & $\pro_{0.9}$-$\saashort$ & \stnu-$\saashort$ \\
    Wilcoxon: [n] z (p) & [320] nan (nan) & [320] -13.963 (*) & [320] -15.506 (*) & [320] -13.963 (*) & [320] -15.506 (*) & [320] -1.63 (0.103) \\
     proportion: [n] z (p) & [nan] nan (nan) & [320] 0.969 (*) & [320] 1.0 (*) & [320] 0.969 (*) & [320] 1.0 (*) & [320] 0.5 (0.955) \\  \hline
    ubo50 & $\pro_{0.9}$-$\react$ & $\react$-$\stnu$ & $\react$-$\saashort$ & $\pro_{0.9}$-$\stnu$ & $\pro_{0.9}$-$\saashort$ & \stnu-$\saashort$ \\
    Wilcoxon: [n] z (p) & [370] nan (nan) & [370] -16.671 (*) & [370] -16.671 (*) & [370] -16.671 (*) & [370] -16.671 (*) & [370] -7.261 (*) \\
     proportion: [n] z (p)& [nan] nan (nan) & [370] 1.0 (*) & [370] 1.0 (*) & [370] 1.0 (*) & [370] 1.0 (*) & [370] 0.622 (*) \\  \hline
    ubo100 & $\pro_{0.9}$-$\react$ & $\react$-$\stnu$ & $\react$-$\saashort$ & $\pro_{0.9}$-$\stnu$ & $\pro_{0.9}$-$\saashort$ & \stnu-$\saashort$ \\
    Wilcoxon: [n] z (p) & [400] -3.463 (*) & [400] -14.988 (*) & [400] -15.286 (*) & [400] -15.969 (*) & [400] -17.332 (*) & [400] -6.406 (*) \\
    proportion: [n] z (p) & [12] 1.0 (*) & [400] 0.962 (*) & [400] 0.97 (*) & [400] 0.975 (*) & [400] 1.0 (*) & [400] 0.6 (*) \\
    \hline
    \end{tabular}}
    \caption{Pairwise comparison on time offline for noise factor $\epsilon=1$. Using a Wilcoxon test and a proportion test. Including all instances for which at least one of the two methods found a feasible solution. Note that the ordering matters: the first method showed is the better of the two in each pair method 1 - method 2 according to Wilcoxon. Each cell shows on the first row [nr pairs] the z-value (p-value) of the Wilcoxon test with (*) for p  $<$ 0.05. Each cell shows on the second row [nr pairs] the ratio of wins (p-value) with (*) for p $<$ 0.05.}
    \label{tab:offline_pairwise_1}
\end{table*}

\begin{table*}[htb!]
    \centering
     \scalebox{0.75}{
    \begin{tabular}{l|llllll} \hline
    j10 & $\pro_{0.9}$-$\react$ & $\react$-$\stnu$ & $\react$-$\saashort$ & $\pro_{0.9}$-$\stnu$ & $\pro_{0.9}$-$\saashort$ & $\saashort$-$\stnu$ \\
    Wilcoxon: [n] z (p)  & [234] -0.996 (0.319) & [258] -9.382 (*) & [235] -12.848 (*) & [258] -9.554 (*) & [235] -13.069 (*) & [258] -6.664 (*) \\
    proportion: [n] z (p) & [nan] nan (nan) & [258] 0.903 (*) & [235] 0.991 (*) & [258] 0.907 (*) & [235] 0.996 (*) & [258] 0.764 (*) \\  \hline
    j20 & $\pro_{0.9}$-$\react$ & $\react$-$\stnu$ & $\react$-$\saashort$ & $\pro_{0.9}$-$\stnu$ & $\pro_{0.9}$-$\saashort$ & \stnu-$\saashort$ \\
    Wilcoxon: [n] z (p) & [177] -0.994 (0.32) & [217] -4.543 (*) & [182] -10.458 (*) & [216] -4.456 (*) & [182] -10.213 (*) & [217] -5.281 (*) \\
     proportion: [n] z (p)& [nan] nan (nan) & [217] 0.816 (*) & [182] 0.973 (*) & [216] 0.815 (*) & [182] 0.967 (*) & [217] 0.604 (*) \\  \hline
    j30 & $\pro_{0.9}$-$\react$ & $\react$-$\stnu$ & $\react$-$\saashort$ & $\pro_{0.9}$-$\stnu$ & $\pro_{0.9}$-$\saashort$ & \stnu-$\saashort$ \\
    Wilcoxon: [n] z (p) & [163] -1.728 (0.084) & [233] -0.083 (0.934) & [183] -7.137 (*) & [232] -0.267 (0.789) & [182] -6.667 (*) & [234] -7.974 (*) \\
    proportion: [n] z (p) & [3] 0.0 (0.248) & [233] 0.7 (*) & [183] 0.891 (*) & [232] 0.69 (*) & [182] 0.879 (*) & [234] 0.726 (*) \\  \hline
    ubo50 & \stnu-$\pro_{0.9}$ & \stnu-$\react$ & \stnu-$\saashort$  & $\pro_{0.9}$-$\react$ & $\pro_{0.9}$-$\saashort$ & $\react$-$\saashort$ \\
  Wilcoxon: [n] z (p)   & [284] -4.827 (*) & [284] -5.557 (*) & [285] -10.588 (*) & [162] -2.643 (*) & [173] -8.741 (*) & [171] -7.484 (*) \\
   proportion: [n] z (p)  & [284] 0.454 (0.138) & [284] 0.479 (0.514) & [285] 0.712 (*) & [7] 1.0 (*) & [173] 0.936 (*) & [171] 0.906 (*) \\  \hline
    ubo100 & \stnu-$\pro_{0.9}$ & \stnu-$\react$ & \stnu-$\saashort$ & $\pro_{0.9}$-$\react$ & $\pro_{0.9}$-$\saashort$ & $\react$-$\saashort$ \\
      Wilcoxon: [n] z (p) & [288] -8.783 (*) & [285] -10.58 (*) & [284] -13.068 (*) & [122] -4.786 (*) & [146] -5.008 (*) & [141] -1.441 (0.15) \\
    proportion: [n] z (p)  & [288] 0.576 (*) & [285] 0.653 (*) & [284] 0.82 (*) & [23] 1.0 (*) & [146] 0.836 (*) & [141] 0.702 (*) \\
    \hline
    \end{tabular}}
    \caption{Pairwise comparison on time offline for noise factor $\epsilon=2$. Using a Wilcoxon test and a proportion test. Including all instances for which at least one of the two methods found a feasible solution. Note that the ordering matters: the first method showed is the better of the two in each pair method 1 - method 2 according to Wilcoxon. Each cell shows on the first row [nr pairs] the z-value (p-value) of the Wilcoxon test with (*) for p  $<$ 0.05. Each cell shows on the second row [nr pairs] the ratio of wins (p-value) with (*) for p $<$ 0.05.}
    \label{tab:offline_pairwise_2}
\end{table*}

\begin{table*}[htb!]
    \centering
     \scalebox{0.75}{
    \begin{tabular}{l|llllll} \hline
    j10 & $\pro_{0.9}$-$\saashort$ & $\pro_{0.9}$-$\stnu$ & $\saashort$-$\stnu$ & $\pro_{0.9}$-$\react$ & $\saashort$-$\react$ & \stnu-$\react$ \\
    Wilcoxon: [n] z (p)  & [340] -0.368 (0.713) & [340] -15.993 (*) & [340] -15.997 (*) & [340] -15.98 (*) & [340] -15.98 (*) & [340] -2.726 (*) \\
    proportion: [n] z (p) & [85] 0.471 (0.664) & [340] 1.0 (*) & [340] 1.0 (*) & [340] 1.0 (*) & [340] 1.0 (*) & [340] 0.765 (*) \\  \hline
    j20 & $\pro_{0.9}$-$\saashort$ & $\pro_{0.9}$-$\stnu$ & $\saashort$-$\stnu$ & $\pro_{0.9}$-$\react$ & $\saashort$-$\react$ & \stnu-$\react$ \\
    Wilcoxon: [n] z (p) & [270] -1.17 (0.242) & [270] -14.246 (*) & [270] -14.245 (*) & [270] -14.243 (*) & [270] -14.243 (*) & [270] -6.441 (*) \\
    proportion: [n] z (p) & [133] 0.556 (0.225) & [270] 1.0 (*) & [270] 1.0 (*) & [270] 1.0 (*) & [270] 1.0 (*) & [270] 0.852 (*) \\  \hline
    j30 & $\pro_{0.9}$-$\saashort$ & $\pro_{0.9}$-$\stnu$ & $\saashort$-$\stnu$ & $\pro_{0.9}$-$\react$ & $\saashort$-$\react$ & \stnu-$\react$ \\
    Wilcoxon: [n] z (p) & [320] -0.653 (0.514) & [320] -15.505 (*) & [320] -15.505 (*) & [320] -15.504 (*) & [320] -15.504 (*) & [320] -8.247 (*) \\
    proportion: [n] z (p) & [191] 0.524 (0.563) & [320] 1.0 (*) & [320] 1.0 (*) & [320] 1.0 (*) & [320] 1.0 (*) & [320] 0.875 (*) \\  \hline
    ubo50 & $\saashort$-$\pro_{0.9}$ & $\pro_{0.9}$-$\stnu$ & $\saashort$-$\stnu$ & $\pro_{0.9}$-$\react$ & $\saashort$-$\react$ & \stnu-$\react$ \\
    Wilcoxon: [n] z (p) & [370] -0.771 (0.44) & [370] -16.67 (*) & [370] -16.67 (*) & [370] -16.67 (*) & [370] -16.67 (*) & [370] -9.859 (*) \\
proportion: [n] z (p)     & [299] 0.522 (0.488) & [370] 1.0 (*) & [370] 1.0 (*) & [370] 1.0 (*) & [370] 1.0 (*) & [370] 0.892 (*) \\  \hline
    ubo100 & $\saashort$-$\pro_{0.9}$ & $\pro_{0.9}$-$\stnu$ & $\saashort$-$\stnu$ & $\pro_{0.9}$-$\react$ & $\saashort$-$\react$ & \stnu-$\react$ \\
    Wilcoxon: [n] z (p) & [400] -0.719 (0.472) & [400] -17.331 (*) & [400] -17.331 (*) & [400] -17.331 (*) & [400] -17.331 (*) & [400] -12.488 (*) \\
     proportion: [n] z (p) & [391] 0.453 (0.069) & [400] 1.0 (*) & [400] 1.0 (*) & [400] 1.0 (*) & [400] 1.0 (*) & [400] 0.925 (*) \\
    \hline
    \end{tabular}}
    \caption{Pairwise comparison on time online for noise factor $\epsilon=1$. Using a Wilcoxon test and a proportion test. Including all instances for which at least one of the two methods found a feasible solution. Note that the ordering matters: the first method showed is the better of the two in each pair method 1 - method 2. Each cell shows on the first row [nr pairs] the z-value (p-value) of the Wilcoxon test with (*) for p  $<$ 0.05. Each cell shows on the second row [nr pairs] proportion (p-value) with (*) for p $<$ 0.05.}
    \label{tab:online_pairwise_1}
\end{table*}

\begin{table*}[htb!]
    \centering
     \scalebox{0.75}{
    \begin{tabular}{l|llllll} \hline
    j10 & $\pro_{0.9}$-$\saashort$ & $\pro_{0.9}$-$\stnu$ & $\saashort$-$\stnu$ & $\pro_{0.9}$-$\react$ & $\saashort$-$\react$ & \stnu-$\react$ \\
    Wilcoxon: [n] z (p) & [235] -0.384 (0.701) & [258] -9.554 (*) & [258] -9.212 (*) & [234] -13.262 (*) & [235] -12.623 (*) & [258] -8.801 (*) \\
     proportion: [n] z (p) & [66] 0.47 (0.712) & [258] 0.907 (*) & [258] 0.899 (*) & [234] 1.0 (*) & [235] 0.987 (*) & [258] 0.891 (*) \\  \hline
    j20 & $\pro_{0.9}$-$\saashort$ & $\pro_{0.9}$-$\stnu$ & $\saashort$-$\stnu$ & $\pro_{0.9}$-$\react$ & $\saashort$-$\react$ & \stnu-$\react$ \\
    Wilcoxon: [n] z (p) & [182] -1.204 (0.229) & [216] -4.456 (*) & [217] -4.913 (*) & [177] -11.279 (*) & [182] -10.957 (*) & [217] -11.457 (*) \\
    proportion: [n] z (p) & [79] 0.43 (0.261) & [216] 0.815 (*) & [217] 0.825 (*) & [177] 0.994 (*) & [182] 0.984 (*) & [217] 0.968 (*) \\  \hline
    j30 & $\saashort$-$\pro_{0.9}$ & $\pro_{0.9}$-$\stnu$ & $\saashort$-$\stnu$ & $\pro_{0.9}$-$\react$ & $\saashort$-$\react$ & \stnu-$\react$ \\
    Wilcoxon: [n] z (p) & [182] -2.014 (*) & [232] -0.928 (0.353) & [234] -2.526 (*) & [163] -10.268 (*) & [183] -9.31 (*) & [233] -9.839 (*) \\
     proportion: [n] z (p) & [119] 0.58 (0.099) & [232] 0.69 (*) & [234] 0.735 (*) & [163] 0.982 (*) & [183] 0.94 (*) & [233] 0.901 (*) \\  \hline
    ubo50 & $\pro_{0.9}$-$\saashort$ & \stnu-$\pro_{0.9}$ & \stnu-$\saashort$ & $\pro_{0.9}$-$\react$ & $\saashort$-$\react$ & \stnu-$\react$ \\
    Wilcoxon: [n] z (p) & [173] -0.722 (0.47) & [284] -3.843 (*) & [285] -3.933 (*) & [162] -11.04 (*) & [171] -9.02 (*) & [284] -13.107 (*) \\
     proportion: [n] z (p) & [147] 0.524 (0.621) & [284] 0.43 (*) & [285] 0.435 (*) & [162] 1.0 (*) & [171] 0.942 (*) & [284] 0.951 (*) \\  \hline
    ubo100 & $\saashort$-$\pro_{0.9}$ & \stnu-$\pro_{0.9}$ & \stnu-$\saashort$ & $\pro_{0.9}$-$\react$ & $\saashort$-$\react$ & \stnu-$\react$ \\
     Wilcoxon: [n] z (p) & [146] -0.482 (0.63) & [288] -8.467 (*) & [284] -9.06 (*) & [122] -9.585 (*) & [141] -6.136 (*) & [285] -14.285 (*) \\
    proportion: [n] z (p)  & [144] 0.458 (0.359) & [288] 0.576 (*) & [284] 0.585 (*) & [122] 1.0 (*) & [141] 0.837 (*) & [285] 0.982 (*) \\
    \hline
    \end{tabular}}
    \caption{Pairwise comparison on time online for noise factor $\epsilon=2$. Using a Wilcoxon test and a proportion test. Including all instances for which at least one of the two methods found a feasible solution. Note that the ordering matters: the first method showed is the better of the two in each pair method 1 - method 2. Each cell shows on the first row [nr pairs] the z-value (p-value) of the Wilcoxon test with (*) for p  $<$ 0.05. Each cell shows on the second row [nr pairs] proportion (p-value) with (*) for p $<$ 0.05.}
    \label{tab:online_pairwise_2}
\end{table*}

\end{document}